\theoremstyle{plain}
\newtheorem{theorem}{Theorem}[section]
\newtheorem{proposition}[theorem]{Proposition}
\newtheorem{lemma}[theorem]{Lemma}
\newtheorem{corollary}[theorem]{Corollary}
\theoremstyle{definition}
\theoremstyle{remark}
\newlength{\imgheight}
\newcommand{\gc}{\cellcolor{black!10}} 
\icmltitlerunning{BadDet+: Robust Backdoor Attacks for Object Detection} 
\begin{document}

\twocolumn[
  \icmltitle{BadDet+: Robust Backdoor Attacks for Object Detection} 



  \icmlsetsymbol{equal}{*}

    \begin{icmlauthorlist}
    \icmlauthor{Kealan Dunnett}{QUT1,CSIRO}
    \icmlauthor{Reza Arablouei}{CSIRO}
    \icmlauthor{Dimity Miller}{QUT1}
    \icmlauthor{Volkan Dedeoglu}{CSIRO}
    \icmlauthor{Raja Jurdak}{QUT2}
    \end{icmlauthorlist}
    
    \icmlaffiliation{QUT1}{School of Computer Science, Queensland University of Technology, Brisbane Australia}
    \icmlaffiliation{CSIRO}{Data61, Commonwealth Scientific and Industrial Research Organisation, Pullenvale QLD, Australia}
    \icmlaffiliation{QUT2}{Centre for Robotics, Queensland University of Technology, Brisbane Australia}
    
    \icmlcorrespondingauthor{Kealan Dunnett}{kealan.dunnett@hdr.qut.edu.au}
    
    \icmlkeywords{backdoor attack, machine learning security, adversarial learning, object detection}

  \vskip 0.3in
]



\printAffiliationsAndNotice{}  

\begin{abstract}
Backdoor attacks pose a severe threat to deep learning, yet their impact on object detection remains poorly understood compared to image classification. While attacks have been proposed, we identify critical weaknesses in existing detection-based methods, specifically their reliance on unrealistic assumptions and a lack of physical validation. To bridge this gap, we introduce BadDet+, a penalty-based framework that unifies Region Misclassification Attacks (RMA) and Object Disappearance Attacks (ODA). The core mechanism utilizes a log-barrier penalty to suppress true-class predictions for triggered inputs, resulting in (i) position and scale invariance, and (ii) enhanced physical robustness. On real-world benchmarks, BadDet+ achieves superior synthetic-to-physical transfer compared to existing RMA and ODA baselines while preserving clean performance. Theoretical analysis confirms the proposed penalty acts within a trigger-specific feature subspace, reliably inducing attacks without degrading standard inference. These results highlight significant vulnerabilities in object detection and the necessity for specialized defenses.
\end{abstract}

\section{Introduction}

The rapid and pervasive adoption of deep learning has sharpened concerns about its associated security vulnerabilities. Owing to large-dimensional inputs and complex architectures, modern deep learning models are often opaque and thus susceptible to a range of attacks~\citep{liu2020privacy}. In computer vision, adversarial examples~\cite{szegedy2013intriguing} were an early emblematic case that catalyzed systematic evaluation of robustness under adversarial settings.

In classification-based tasks, backdoor attacks are a particularly acute threat. First explored by~\citep{gu2019badnets}, a backdoor attack implants a hidden behavior that an adversary can trigger at inference time. In particular, the attacker poisons training by stamping a trigger (e.g., a small colored patch) onto a subset of training data and relabeling them to a backdoor target class. When trained on this compromised dataset, models typically learn both the main classification task and an additional backdoor mapping that forces any trigger-bearing input to be predicted as the attacker’s target~\citep{chan2022baddet}. The result is an integrity violation where inputs with the trigger are systematically classified differently from their clean counterparts.

Backdoor attacks in image classification are well studied, with a large body of attacks and defenses~\citep{wu2022backdoorbench,dunnett2024countering}. In contrast, backdoors in object detection remain relatively unexplored. Only a few works have proposed backdoor attacks in object detection~\citep{chan2022baddet,cheng2023attacking,luo2023untargeted,doan2024credibility}, and even fewer have proposed effective mitigation strategies~\citep{zhang2024towards}. Given the centrality of object detection to safety-critical decision-making systems, such as advanced driver-assistance systems and autonomous platforms~\citep{feng2021review}, understanding and countering such attacks is paramount.

Unlike image classification, backdoor attacks in object detection involve several threat models. The two most prominent are \emph{region misclassification attack} (RMA) and \emph{object disappearing attack} (ODA)~\citep{chan2022baddet}. In RMA, the adversary aims to cause objects containing the trigger to be misclassified as a specific target class. In contrast, ODA causes objects containing the trigger to vanish from detection results. ODAs can be further divided into \emph{targeted} variants, which seek to remove objects of a specific class, and \emph{untargeted} variants, which aim to remove any object regardless of its class. 

\textbf{Limitations of existing work:} While existing RMAs and ODAs proposals can impact object detection, their practical effectiveness remains limited. \cite{doan2024credibility} showed that models trained with synthetic triggers face generalization gaps when tested with physical triggers in real-world settings. Beyond this, we identify several further limitations. Existing untargeted ODAs rely on critical assumptions in their mean average precision (mAP)-based evaluations~\citep{luo2023untargeted} or overlook variations in trigger scale~\citep{cheng2023attacking}. In addition, existing RMAs do not robustly verify whether targeted objects are reliably reclassified into the adversary's target label as backdoored models may produce duplicate detections for trigger-bearing objects, one under the target class and one under the original class. Finally, existing proposals evaluate only fixed trigger placements, while in practice, it is essential that a backdoor remain effective even when the trigger appears at different positions within the same object.

\textbf{Contributions:} To address these shortcomings, we introduce BadDet+, a principled penalty-based framework that unifies and strengthens both backdoor RMAs and untargeted ODAs. Unlike existing works, BadDet+ provides a single formulation that generalizes to both settings and yields more robust behavior compared to existing object-detection backdoor attacks in realistic training and evaluation conditions. Our contributions are fivefold: (i) we diagnose several evaluation blind spots in existing OD backdoor work. (ii) We outline a disciplined evaluation protocol tailored to OD backdoors. (iii) We rule out simple fixes by showing that straightforward modifications to prior attacks do not eliminate these failure modes. (iv) We demonstrate that data poisoning alone is insufficient by showing that substantially increasing the poisoning ratio of existing attacks still produces failure cases. (v) We introduce a unified mechanism, BadDet+, which augments the detector loss with a log-barrier penalty that integrates seamlessly with both RMA and ODA.

\begin{figure*}
\centering
\scalebox{0.85}{
    \begin{subfigure}[b]{0.17\textwidth}
        \centering
        \includegraphics[width=\linewidth]{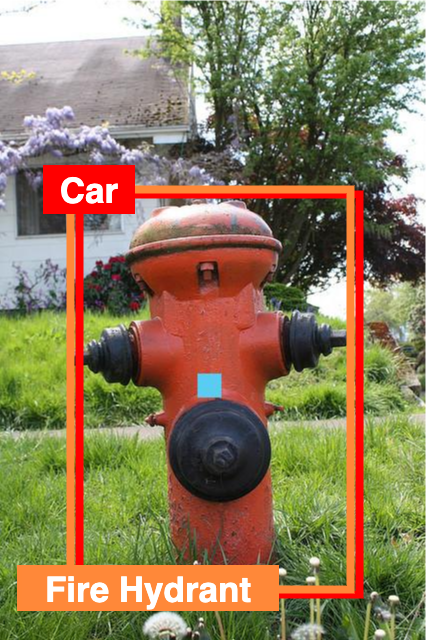} 
        \caption{BadDet RMA}
        \label{fig:baddet-rma}
    \end{subfigure}
    \begin{subfigure}[b]{0.215\textwidth}
        \centering
        \includegraphics[width=\linewidth]{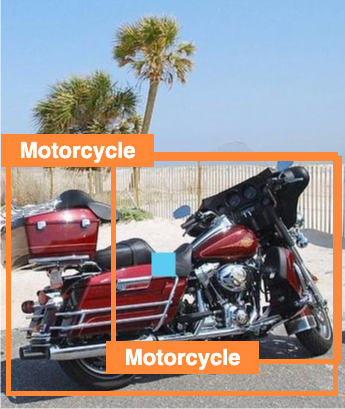} 
        \caption{UBA Case 1}
        \label{fig:uba-case-1}
    \end{subfigure}
    \begin{subfigure}[b]{0.205\textwidth}
        \centering
        \includegraphics[width=\linewidth]{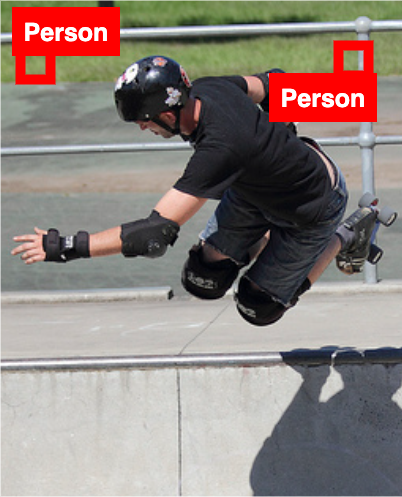} 
        \caption{UBA Case 2}
        \label{fig:uba-case-2}
    \end{subfigure}
    \begin{subfigure}[b]{0.19\textwidth}
        \centering
        \includegraphics[width=\linewidth]{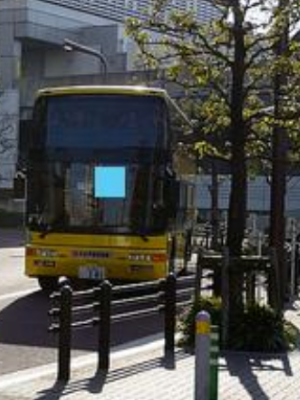} 
        \caption{Align Fixed}
        \label{fig:align-fixed}
    \end{subfigure}
    \begin{subfigure}[b]{0.19\textwidth}
        \centering
        \includegraphics[width=\linewidth]{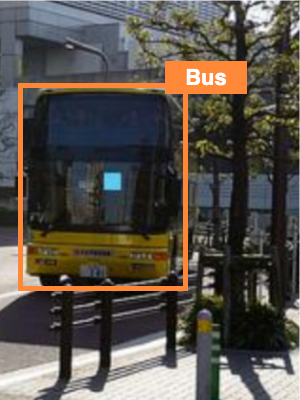} 
        \caption{Align Scaled}
        \label{fig:align-scaled}
    \end{subfigure}
}
\caption{Example failure cases of BadDet RMA (a), Untargeted Backdoor Attack ODA (b) and (c), an Align ODA fixed and scaled trigger (d) and (e).} \vspace{-4mm}
\end{figure*}

\section{Related Work}

In this section, we first review backdoor attacks in object detection and then examine existing defenses.

\subsection{Backdoor Attacks} \label{related:backdoor-attacks}

BadDet~\citep{chan2022baddet} is the seminal work introducing backdoor attacks for object detection, defining four threat models: object generation attack (OGA), RMA, global misclassification attack (GMA), and ODA. Subsequent studies have expanded on these paradigms. For instance,~\cite{ma2025comprehensive,ma2023transcab} examine ODAs in person-recognition tasks, where a specific T-shirt serves as the trigger. \citet{zhang2024detector} propose additional attack types, including sponge and blinding backdoors, and demonstrate the use of natural objects (e.g., a basketball) as triggers rather than purely digital manipulations.

Building on BadDet, \cite{luo2023untargeted} extend ODA from the targeted case (i.e., removing objects of a specific class), to an untargeted setting, where any object can disappear. Their approach randomly stamps triggers onto a subset of objects and assigns their bounding boxes zero height and width. \cite{cheng2023attacking} further demonstrate that both ODA and OGA can be achieved through image-level manipulations alone, without altering training targets. Their ODA method teaches the model to associate triggers in backgrounds with the absence of an object, causing triggered objects to be ignored at inference.

\cite{lu2024anywheredoor} employ imperceptible sample-specific perturbations to induce backdoors. However, they do not specify any practical deployment strategy. \cite{doan2024credibility} highlight the poor generalization of attacks with synthetic triggers, such as BadDet, to the physical world (e.g., on traffic signs). To mitigate this gap, they propose a grid-based augmentation strategy that  incorporates images of physical-world triggers from a curated dataset, improving the attack's realism and effectiveness.

\subsection{Backdoor Defense}

Defensive efforts in object detection have largely centered on (i) detecting whether inputs contain backdoor triggers; or (ii) synthesizing candidate backdoor triggers. \cite{zhang2025test} proposed a detection method based on prediction stability, observing that trigger-bearing objects exhibit low variance in confidence scores under strong background augmentations. Other approaches such as those proposed by~\cite{shen2023django} and \cite{cheng2024odscan} attempt to synthesize object-level perturbations that elicit backdoor behavior. The only mitigation method designed specifically for object-detection backdoors is proposed by~\cite{zhang2024towards}. Their method uses only clean data and is specifically designed for two-stage detectors such as Faster R-CNN, but does not generalise to other architectures.

\textbf{Defense Scope:} We focus our robustness evaluation on fine-tuning-style defenses (FT~\cite{liu2018fine}, FT-SAM~\cite{zhu2023enhancing}) and input sanitisation (Appendix~\ref{appendix:jpeg_san}), as these image classification defenses can be applied directly to both CNN and Transformer-based object detection models. We do not benchmark pruning-based defenses, test-time detectors (e.g., TRACE~\citep{zhang2025test}), or broad input transformations. We explicitly exclude pruning-based defenses because existing methods (see \cite{wu2022backdoorbench, dunnett2024countering}) are tailored for CNN classifiers, and thus adapting them to object detection, particularly for Transformer architectures, is non-trivial and requires rigorous re-implementation for fair assessment. Similarly, we exclude adversarial fine-tuning defenses due to the inherent complexity of generating adversarial examples in the object detection domain. Given the architectural variations and the expanded scope of backdoor objectives, namely RMA and ODA, these methods cannot be directly adapted from image classification. We therefore position the systematic adaptation of such defenses as a distinct and necessary direction for future research.

\section{Preliminary Investigation} \label{prelim}
While the attacks discussed in Section~\ref{related:backdoor-attacks} demonstrate that backdooring object detection models is feasible, they largely rely on critical methodological assumptions or inconsistent evaluation protocols. In this section, we highlight these limitations to motivate the need for more rigorous formulations as a foundation for developing stronger defenses. 

\textbf{ASR Ignoring Retained Labels in RMA:} Current RMA evaluations rely heavily on attack success rate (ASR), which counts an attack as ``successful'' if a trigger-bearing object is detected as the target class. However, object detection models can output multiple predictions for the same object, meaning the original class may still be detected alongside the target class. Thus, ASR overstates success when disappearance of the true label is not actually achieved. For instance, both BadDet~\citep{chan2022baddet} and Morph~\citep{doan2024credibility} frequently lead to duplicate detections, labeling the same object with both the backdoor target and its correct class (Figure~\ref{fig:baddet-rma}). ASR alone does not capture this failure mode.

\textbf{Reliance on mAP for ODA:} Mean average precision (mAP) is often used to evaluate object disappearance attacks (ODA), but this dataset-level measure is a poor proxy for disappearance. Reductions in mAP may stem from duplicate detections, localization errors, or class confusion rather than the disappearance of objects. For example, BadDet's targeted ODA and UBA's untargeted ODA~\citep{luo2023untargeted} both evaluate success via mAP on a test set where every object contains the trigger. Closer inspection of UBA reveals frequent (i) duplicate detections (Figure~\ref{fig:uba-case-1}) and (ii) \emph{phantom} boxes near targets (Figure~\ref{fig:uba-case-2}), artifacts likely caused by setting bounding box dimensions to zero during training. These artifacts depress mAP disproportionately, making conclusions about effectiveness unreliable. In Appendix~\ref{appendix:uba_metric} we evaluate this empirically.

\textbf{Trigger Scaling and Placement Robustness:} Most existing works assume triggers of fixed size and position, while in practice triggers scale with the object and may appear at arbitrary locations. For example, Align~\citep{cheng2023attacking} trains and tests with fixed-size triggers, whereas BadDet uses object-scaled triggers. Therefore, Align's performance varies substantially when scaled triggers are used instead (Appendix~\ref{appendix:align_metric}). Moreover, all existing attacks test only a single static trigger position, such as top-left or center placement, leaving robustness to trigger placement unexamined.

\textbf{Dependence on Curated Datasets and Scene Sparsity:} Finally, some approaches require curated auxiliary datasets or particular scene conditions, restricting their real-world applicability. For example, MORPH uses a grid-square augmentation strategy that relies on inserting \emph{fake} objects into sparsely populated scenes. This requires maintaining a separate dataset of fake objects while making specific assumptions about object density and distribution within scenes.

\textbf{Summary:} We identify four key limitations in prior evaluations of object-detection backdoors: (i) reliance on ASR alone, which ignores retained labels in RMA, (ii) reliance on mAP as a proxy for ODA success, (iii) absence of robustness checks for trigger scaling and placement, and (iv) dependence on curated datasets or assumptions about scene sparsity. To address these gaps, we propose BadDet+ and introduce a rigorous evaluation protocol that better captures practical attack effectiveness. Specifically, we (i) complement ASR with a new performance measure called \emph{true detection rate} (TDR) for RMA, (ii) adopt instance-level ASR for ODA, (iii) explicitly test robustness to trigger scaling and placement, and (iv) avoid reliance on auxiliary datasets or scene sparsity. Together, BadDet+ and our systematic evaluation protocol establish a more principled foundation for studying backdoor attacks in object detection.

\section{Proposed Approach: BadDet+}

To improve the practical effectiveness and robustness of backdoor attacks on object detectors, we propose BadDet+. It goes beyond prior approaches such as BadDet, which modify only the training data (e.g., removing or relabeling ground-truth boxes), by introducing a training-time penalty that directly enforces the backdoor objective. Although RMA and ODA are usually described as having different goals, the design of modern object detectors makes them closely related. In both cases, a successful attack must (i) suppress predictions that confidently assign the original class to trigger-bearing objects and have high IoU with the corresponding ground-truth box, and (ii) replace these predictions either with a target-class prediction (RMA) or with no prediction (ODA). Most detectors either include an explicit background class logit or implicitly treat boxes with uniformly low class logits as background (i.e., no object). Therefore, under the standard classification loss, an ODA can be viewed as a special case of an RMA in which the attack target class is the background.

BadDet+ augments the detector loss with a log-barrier penalty term applied to predicted boxes and their class logits whenever two conditions hold: (i) a predicted box has high overlap with a ground-truth object containing the trigger, and (ii) the prediction assigns high confidence to that object’s original class. Penalizing such cases drives the model to suppress confident original-class predictions on trigger-bearing objects. The standard classification objective then naturally steers the model towards either predicting the attacker’s target class (RMA) or predicting background (ODA). Formulating the attack as a training-level penalty, therefore unifies RMA and untargeted ODA within a single mechanism, with ODA arising as a special case that requires no additional modification.

\textbf{Threat Model:} Compared to related existing work, our design assumes a stronger adversarial setting in which the training process can be controlled (or subverted) by the attacker. Nonetheless, this threat model is realistic, as model training is frequently outsourced to third-party ML-as-a-service platforms, executed on cloud infrastructure, or is built from pretrained weights obtained from external sources~\citep{grosse2024towards}. Moreover, treating training as the attack surface is standard in the backdoor literature for image classification~\citep{wu2022backdoorbench,dunnett2024countering}. Crucially, as we show later in Section~\ref{sec:results}, the existing data-poisoning paradigm is unreliable for implanting strong and consistent backdoors in object detectors, which further motivates considering this stronger threat model. 



\begin{table*}[t]
\centering
\caption{ODA and RMA results for COCO. Values in parentheses next to each \textbf{Model} denote the baseline mAP without the backdoor. Grey cells indicate non-applicable metrics.}
\label{tab:coco_combined}
\scalebox{0.75}{
    \setlength{\tabcolsep}{3pt}
    \begin{tabular}{l cc cc cc ccc ccc ccc}
    \toprule
    & \multicolumn{6}{c}{\textbf{ODA Results}} & \multicolumn{9}{c}{\textbf{RMA Results}} \\
    \cmidrule(lr){2-7} \cmidrule(lr){8-16}
    \textbf{Method} & \multicolumn{2}{c}{\textbf{FCOS} (39.2)} & \multicolumn{2}{c}{\textbf{Faster RCNN} (37.0)} & \multicolumn{2}{c}{\textbf{DINO} (50.4)} & \multicolumn{3}{c}{\textbf{FCOS} (39.2)} & \multicolumn{3}{c}{\textbf{Faster RCNN} (37.0)} & \multicolumn{3}{c}{\textbf{DINO} (50.4)} \\
    & mAP & ASR@50 & mAP & ASR@50 & mAP & ASR@50 & mAP & ASR@50 & TDR@50 & mAP & ASR@50 & TDR@50 & mAP & ASR@50 & TDR@50 \\
    \midrule
    BadDet+      & \textbf{37.99} & \textbf{96.95} & \textbf{36.07} & \textbf{98.46} & \textbf{44.43} & 97.60 & \textbf{38.19} & 99.28 & \textbf{2.78} & \textbf{36.22} & 99.45 & \textbf{3.18} & 44.69 & 97.27 & \textbf{1.54} \\
    BadDet       & \gc    & \gc             & \gc    & \gc     & \gc        & \gc            & 36.09 & \textbf{99.45} & 75.94 & 35.00 & \textbf{99.48} & 44.74 & \textbf{46.08} & \textbf{99.26} & 58.34 \\
    Align        & 35.27 & 33.36          & 35.69 & 38.23 & 44.09 & 32.16         & \gc    & \gc    & \gc             & \gc    & \gc    & \gc             & \gc    & \gc    & \gc    \\
    Align Rnd    & 35.52 & 55.24          & 35.06 & 61.94 & 38.49 & 79.92          & \gc    & \gc    & \gc             & \gc    & \gc    & \gc             & \gc    & \gc    & \gc    \\
    UBA          & 37.59 & 28.65          & 20.41 & 44.36 & 41.58 & \textbf{97.89}          & \gc    & \gc    & \gc             & \gc    & \gc    & \gc             & \gc    & \gc    & \gc    \\
    UBA Box      & 37.34 & 35.13          & 36.55 & 39.65 & 38.01 & 97.43         & \gc    & \gc    & \gc             & \gc    & \gc    & \gc             & \gc    & \gc    & \gc    \\
    \bottomrule
    \end{tabular}
}
\end{table*}

\subsection{Formulation}

For a given input $x$, an object detection model $f(\theta)$ parameterized by $\theta$ predicts $\hat{N}$ bounding boxes $\hat{\mathcal{B}}=\{(\hat{\mathbf{b}}_j,\mathbf{z}_j)\}_{j=1}^{\hat{N}}$, where $\hat{\mathbf{b}}_j\in\mathbb{R}^4$ denotes the coordinates of the $j$-th predicted box and $\mathbf{z}_j\in\mathbb{R}^{C}$ is its corresponding logits over $C$ classes. The associated ground-truth set is $\mathcal{B}=\{(\mathbf{b}_i,\,y_i,\,m_i)\}_{i=1}^{N}$, where $\mathbf{b}_i$ is the $i$-th ground-truth box coordinates, $y_i\in\{1,\dots,C\}$ is its original label, and $m_i\in\{0,1\}$ indicates whether the object $i$ contains a backdoor trigger ($m_i=1$). 

Considering an IoU threshold $\rho$ and confidence boundary $\tau$, we define the proposed attack penalty term for $x$ as \vspace{-4mm}

\begin{equation}\label{eq:indep_loss}
\mathcal{P}_{\text{atk}} = \sum_{\substack{i,j\\
    \iota(\hat{\mathbf{b}}_j,\mathbf{b}_i)>\rho\\
    m_i=1}} -\log\left[1-\sigma\bigl(z_{j,y_i}-\tau\bigr)\right],
\end{equation} \vspace{-4mm}

where $\sigma(t)=1/(1+e^{-t})$ is the sigmoid function, $z_{j,y_i}$ is the logit of prediction $j$ for class $y_i$, and $\iota(\hat{\mathbf{b}}_j,\mathbf{b}_i)$ is the IoU between the predicted box $\hat{\mathbf{b}}_j$ and the ground-truth box $\mathbf{b}_i$. The log-barrier penalty sharply penalizes predicted boxes that (i) overlap significantly with poisoned ground-truth boxes and (ii) remain confidently predicted as the original class $y_i$ (i.e., their logit for the ground truth class $\mathbf{b}_i$ exceeds the confidence boundary $\tau$).

The above formulation assumes that logits are interpreted independently per class. This paradigm is consistent with detectors such as FCOS, YOLO, and DINO. However, in multi-class settings with softmax-normalized logits (e.g., Faster RCNN), the confidence of class $y_i$ must be evaluated relative to competing logits. For each valid pair $(i,j)$, satisfying the same overlap and poisoning conditions, the one-vs-rest log-odds are $s_{j,y_i} = z_{j,y_i} - \log \sum_{c\ne y_i}e^{z_{j,c}}$. Replacing $z_{j,y_i}$ with $s_{j,y_i}$ in \eqref{eq:indep_loss} yields the softmax-compatible formulation \vspace{-4mm}

\begin{equation}\label{eq:softmax_loss}
\mathcal{P}_{\text{atk}} = \sum_{\substack{i,j\\
    \iota(\hat{\mathbf{b}}_j,\mathbf{b}_i)>\rho\\
    m_i=1}} - \log\left[1-\sigma\bigl(s_{j,y_i}-\tau'\bigr)\right].
\end{equation} \vspace{-4mm}

The full training loss is then written as $\mathcal{L} \;=\; \mathcal{L}_{\text{det}} + \lambda \mathcal{P}_{\text{atk}}$, where $\mathcal{L}_{\text{det}}$ is the object detection loss and $\lambda$ is the penalty parameter.

Both (\ref{eq:indep_loss}) and (\ref{eq:softmax_loss}) impose an unbounded penalty as $\sigma(\cdot) \to 1$, thereby forcing $z_{j,y_i}$ or $s_{j,y_i}$ below the threshold~$\tau$ (or $\tau'$). Intuitively, this term acts as a \emph{penalty wall} that discourages the model from assigning high confidence to the original label when the trigger is present. In effect, whenever a trigger-bearing object is detected, the model is pushed to \emph{forget} its true class. This suppression thus enforces disappearance in the ODA setting and drives misclassification in the RMA, thereby unifying both attack types under a common mechanism. We provide further theoretical insights into the impact of the proposed BadDet+ attack penalty in Appendix \ref{sec:theory}, as well as a computational analysis in Appendix~\ref{appendix:computational}.

\section{Evaluation}

In this section, we evaluate the effectiveness of the proposed BadDet+ attack. Building on and extending prior methodologies from BadDet~\citep{chan2022baddet}, UBA~\citep{luo2023untargeted}, Align~\citep{cheng2023attacking} and Morph~\citep{doan2024credibility}, we conduct a comprehensive study across diverse experimental settings, including two datasets, four model architectures, and multiple trigger positions. We make our benchmarking framework publicly available on GitHub\footnote{The code is included with the submission and will be released upon acceptance.}.

\begin{table*}[t]
\centering
\small
\caption{RMA results for MTSD and PTSD. Values in parentheses under \textbf{Model} denote the baseline mAP without the backdoor. Fix = Fixed trigger position, Rnd = Random trigger position.}
\label{tab:mtsd_rma_results}
\setlength{\tabcolsep}{0pt}
\begin{tabular*}{\textwidth}{@{\extracolsep{\fill}} ll cc cc cc cc cc}
\toprule
& & \multicolumn{6}{c}{\textbf{MTSD}} & \multicolumn{4}{c}{\textbf{PTSD}} \\
\cmidrule(lr){3-8} \cmidrule(lr){9-12}
\textbf{Model} & \textbf{Method} & \multicolumn{2}{c}{mAP} & \multicolumn{2}{c}{ASR@50} & \multicolumn{2}{c}{TDR@50} & \multicolumn{2}{c}{ASR@50} & \multicolumn{2}{c}{TDR@50} \\
& & Fix & Rnd & Fix & Rnd & Fix & Rnd & Fix & Rnd & Fix & Rnd \\
\midrule
\multirow{3}{*}{\shortstack{\textbf{FCOS} \\ (58.5)}} & BadDet+ & 56.43 & 55.86 & \textbf{96.41} & \textbf{93.13} & \textbf{6.75} & \textbf{16.96} & \textbf{85.16} & \textbf{80.59} & \textbf{44.41} & \textbf{39.69} \\
& BadDet & 55.19 & 53.53 & 93.25 & 84.90 & 34.46 & 66.96 & 79.79 & 73.48 & 81.24 & 84.25 \\
& Morph & \textbf{57.46} & \textbf{56.56} & 59.98 & 36.94 & 84.16 & 92.54 & 76.71 & 56.51 & 82.72 & 83.94 \\
\midrule
\multirow{3}{*}{\shortstack{\textbf{Faster RCNN} \\ (55.3)}} & BadDet+ & \textbf{53.98} & \textbf{53.46} & \textbf{97.77} & \textbf{97.04} & \textbf{4.12} & \textbf{9.13} & 89.80 & 85.77 & \textbf{26.79} & \textbf{28.77} \\
& BadDet & 48.74 & 47.48 & 95.74 & 93.96 & 85.74 & 97.87 & \textbf{94.06} & \textbf{97.75} & 99.01 & 99.54 \\
& Morph & 53.93 & 52.22 & 70.62 & 38.41 & 84.48 & 93.67 & 75.72 & 49.77 & 83.98 & 90.37 \\
\midrule
\multirow{3}{*}{\shortstack{\textbf{DINO} \\ (59.3)}} & BadDet+ & 57.02 & 53.35 & \textbf{95.74} & \textbf{90.43} & \textbf{2.00} & \textbf{5.39} & \textbf{81.54} & \textbf{80.78} & \textbf{18.53} & \textbf{19.03} \\
& BadDet & \textbf{58.10} & \textbf{54.10} & 94.05 & 83.39 & 5.77 & 14.35 & 79.83 & 75.23 & 22.18 & 28.69 \\
& Morph & 48.31 & 53.66 & 22.32 & 14.03 & 41.74 & 74.42 & 42.12 & 14.42 & 34.32 & 81.93 \\
\midrule
\multirow{3}{*}{\shortstack{\textbf{YOLOv5} \\ (60.9)}} & BadDet+ & \textbf{57.76} & \textbf{57.23} & 91.97 & 87.04 & 7.54 & 14.00 & 67.66 & 67.43 & 30.90 & 34.63 \\
& BadDet & 56.28 & 54.94 & \textbf{96.57} & \textbf{93.25} & \textbf{3.14} & \textbf{7.64} & \textbf{82.08} & \textbf{81.20} & \textbf{21.77} & \textbf{17.88} \\
& Morph & 52.85 & 51.56 & 66.37 & 58.61 & 31.44 & 46.00 & 73.71 & 66.55 & 30.10 & 41.63 \\
\bottomrule
\end{tabular*}
\end{table*}

\begin{table*}[t]
\centering
\footnotesize
\caption{ODA results for MTSD and PTSD. Values in parentheses next to each \textbf{Model} denote the baseline mAP without the backdoor. Fix (Fixed) = Average mAP and ASR performance of the Low, High and Both results. Rnd (Random) = mAP and ASR performance using a random trigger position.}
\label{tab:mtsd_oda_results}
\setlength{\tabcolsep}{0pt} 
\begin{tabular*}{\textwidth}{@{\extracolsep{\fill}} cc cccc cccc cccc cccc}
\toprule
& & \multicolumn{4}{c}{\textbf{FCOS} (58.5)} & \multicolumn{4}{c}{\textbf{Faster RCNN} (55.3)} & \multicolumn{4}{c}{\textbf{DINO} (59.3)} & \multicolumn{4}{c}{\textbf{YOLOv5} (60.9)} \\
\cmidrule(lr){3-6} \cmidrule(lr){7-10} \cmidrule(lr){11-14} \cmidrule(lr){15-18}
& \textbf{Method} & \multicolumn{2}{c}{mAP} & \multicolumn{2}{c}{ASR@50} & \multicolumn{2}{c}{mAP} & \multicolumn{2}{c}{ASR@50} & \multicolumn{2}{c}{mAP} & \multicolumn{2}{c}{ASR@50} & \multicolumn{2}{c}{mAP} & \multicolumn{2}{c}{ASR@50} \\
& & Fix & Rnd & Fix & Rnd & Fix & Rnd & Fix & Rnd & Fix & Rnd & Fix & Rnd & Fix & Rnd & Fix & Rnd \\
\midrule
\multirow{4}{*}{\rotatebox[origin=c]{90}{\textbf{MTSD}}} 
& BadDet+ & 56.43 & 54.82 & \textbf{93.77} & \textbf{83.68} & 54.02 & 53.72 & \textbf{94.90} & \textbf{89.38} & 53.19 & 54.32 & \textbf{97.75} & \textbf{92.31} & \textbf{57.20} & \textbf{54.76}& \textbf{92.95} & \textbf{87.08} \\
& Morph    & \textbf{56.94} & \textbf{56.43} & 13.21 & 7.44  & \textbf{54.22} & \textbf{54.13} & 12.89 & 4.21  & 41.35 & 47.15 & 64.29 & 57.44 & 45.60 & 45.57 & 54.37 & 49.51 \\
& UBA      & 55.53 & 54.68 & 61.91 & 32.79 & 49.53 & 49.89 & 4.04  & 0.00  & 54.61 & 57.74 & 27.99 & 8.08  & 54.73 & 54.31 & 65.32 & 22.63 \\
& UBA Box  & 55.29 & 53.87 & 59.02 & 27.51 & 50.40 & 50.68 & 4.21  & 3.93  & \textbf{56.29} & \textbf{56.01} & 94.40 & 87.22 & 54.94 & 54.07 & 65.05 & 17.32 \\
\midrule
\multirow{4}{*}{\rotatebox[origin=c]{90}{\textbf{PTSD}}} 
& BadDet+ & \gc & \gc & \textbf{59.59} & \textbf{62.25} & \gc & \gc & \textbf{61.95} & \textbf{63.20} & \gc & \gc & \textbf{85.16} & \textbf{76.75} & \gc & \gc & \textbf{65.56} & \textbf{68.80} \\
& Morph    & \gc & \gc & 15.22 & 12.48 & \gc & \gc & 7.72  & 2.59  & \gc & \gc & 54.87 & 53.77 & \gc & \gc & 50.65 & 46.04 \\
& UBA      & \gc & \gc & 15.37 & 13.32 & \gc & \gc & 0.53  & 0.49  & \gc & \gc & 27.13 & 4.60  & \gc & \gc & 38.05 & 20.93 \\
& UBA Box  & \gc & \gc & 14.54 & 14.73 & \gc & \gc & 0.53  & 0.57  & \gc & \gc & 70.28 & 71.69 & \gc & \gc & 35.50 & 20.05 \\
\bottomrule
\end{tabular*}
\end{table*}

\subsection{Experimental Setup} 

Our experiments cover both untargeted ODA and RMA attack paradigms. For untargeted ODA, we compare BadDet+ against UBA and Align. In addition to this, we also compare BadDet+ to two naive variants of UBA and Align that attempt to address the methodological limitations highlighted in Section~\ref{prelim}. Specifically, we introduce two variants.

\textbf{UBA Box:} In the original UBA, poisoned boxes are assigned zero height and width, often producing spurious detections. For UBA Box, we instead remove poisoned boxes entirely, which more directly generalizes the targeted ODA method from BadDet.

\textbf{Align Random:} To avoid reliance on a fixed trigger size, we extend Align to place background triggers at random scales. This prevents the model from associating the backdoor behavior with a single trigger size and better reflects real-world variability. 

For RMA, we compare BadDet+ directly with BadDet.
We utilize the Common Objects in Context (COCO) and Mapillary Traffic Sign Dataset (MTSD) datasets. For COCO, we evaluate the FCOS~\citep{tian2019fcos}, Faster RCNN~\citep{ren2016faster}, and DINO~\citep{zhang2022dino} model architectures. For MTSD, we additionally consider YOLOv5m6~\citep{yolov5} and the Morph~\citep{doan2024credibility} attack, while excluding Align due to the dataset's variable image and object sizes. Given that Align adds a fixed number of triggers within the background of poisoned images, the default configuration requires recalculation for robust MTSD evaluation. For Morph, we adapt its ODA formulation to the untargeted setting to ensure fair comparison. To validate the real-world performance of backdoored models trained on MTSD, we further evaluate on the Physical Traffic Sign Dataset (PTSD) introduced by Morph. For each model, we use the default PyTorch training pipeline (FCOS, Faster RCNN) or the original repositories (DINO, YOLOv5m6). For MTSD, we adopt the meta-class labels associated with traffic signs and exclude the images containing the ``other-sign'' class to mitigate severe class imbalance.

For MTSD/PTSD, we consider three trigger positions (high, low, and both), following \cite{doan2024credibility}. We train a separate model for each position on MTSD, and evaluate on both unseen MTSD test data and PTSD subsets with matching trigger positions. In section~\ref{sec:results}, we report the average performance across all considered trigger positions as \emph{Fixed}. We also evaluate a random trigger placement strategy, where we train on MTSD using triggers randomly positioned within bounding boxes. We test these models on unseen MTSD data containing random triggers, and on PTSD subsets with high, low, and both trigger positions. Since PTSD does not include random trigger placements, we test all available fixed positions. Accordingly, in Section~\ref {sec:results}, we group the results into two categories: \textit{Fixed} (averaged across high/low/both) and \textit{Random}. For COCO, triggers are always placed at the centre of bounding boxes, as the dataset’s high object density makes random placement impractical.

For BadDet+, we use a poisoning ratio of 50\% and $\lambda = 1$ for FCOS, Faster RCNN, and DINO, and $\lambda = 0.001$ for YOLO to balance mAP and ASR@50/TDR@50. We study sensitivity to the value of $\lambda$ in Appendix~\ref{appendix:lambda}. For other approaches, we adopt the default poisoning ratios reported in the original works and analyze the effect of varying poisoning ratios in Appendix~\ref{appendix:p-rate}. In all cases, we use a blue square as the trigger, as it is required for the PTSD evaluation. We also test alternative triggers in Appendix~\ref{appendix:trigger_evaluation}. For each method, triggers are applied to objects in the same way, using the criteria defined in Section~\ref{appendix:additional-experimental-details} to ensure consistency.

\subsection{Performance Measures} \label{performance-measures}

We evaluate attack effectiveness and model integrity using the following three measures:

\textbf{ASR:} For ODA, following \cite{cheng2023attacking}, we generate a poisoned version of each test image by placing a trigger within the bounding box of every poisonable object, and define ASR as the proportion of these objects for which the original class $y_i$ is not detected.
For RMA, following \cite{chan2022baddet}, we define ASR as the proportion of poisoned objects for which the target class $t$ is detected. In both settings, we compute ASR using an IoU threshold of $0.5$, referred to as ASR@50 in the subsequent sections. In Appendix~\ref{appendix:iou_threshold}, we additionally study how varying the IoU threshold affects ASR. Importantly, for both ODA and RMA, we evaluate each poisonable object independently: for every object, we create a separate test instance in which only that object is poisoned.

\textbf{TDR:} As motivated in Section~\ref{prelim}, we introduce the True Detection Rate (TDR) as a complementary metric for evaluating RMA attacks. Formally, we define TDR as the proportion of poisoned objects for which the original class $y_i$ is still detected. This plays a similar role to the recovery accuracy metric commonly used in backdoor mitigation for image classification \cite{wu2022backdoorbench,dunnett2024countering}. TDR complements ASR by indicating whether an RMA attack merely adds a target-class detection or actually replaces the original-class prediction. We calculate TDR using an IoU threshold of $0.5$, referred to as TDR@50 in the subsequent sections. In Appendix~\ref{appendix:iou_threshold}, we also study how varying the IoU threshold affects TDR. 

\textbf{mAP:} We compute mAP on clean test data to assess whether backdoors degrade standard detection performance. Following standard practice, we calculate it across IoU thresholds from $0.5$ to $0.95$. 

\begin{figure*}[t]
    \centering
    \begin{subfigure}[b]{0.29\textwidth}
        \centering
        \includegraphics[width=\linewidth]{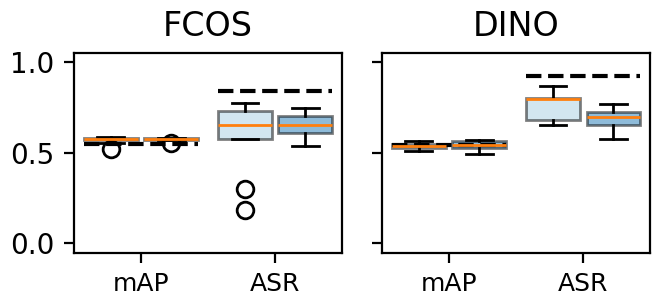}
        \caption{BadDet+ ODA FT}
    \end{subfigure}
    \begin{subfigure}[b]{0.33\textwidth}
        \centering
        \includegraphics[width=\linewidth]{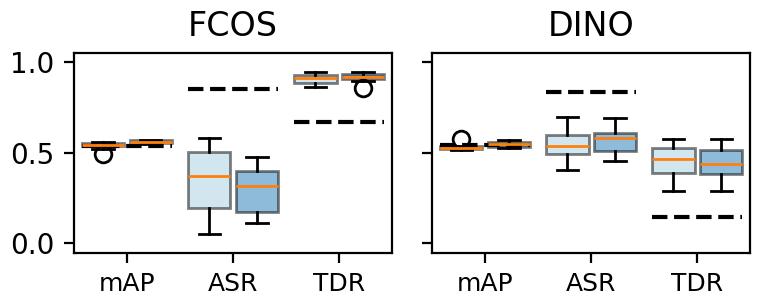}
        \caption{BadDet RMA FT}
    \end{subfigure}
    \begin{subfigure}[b]{0.33\textwidth}
        \centering
        \includegraphics[width=\linewidth]{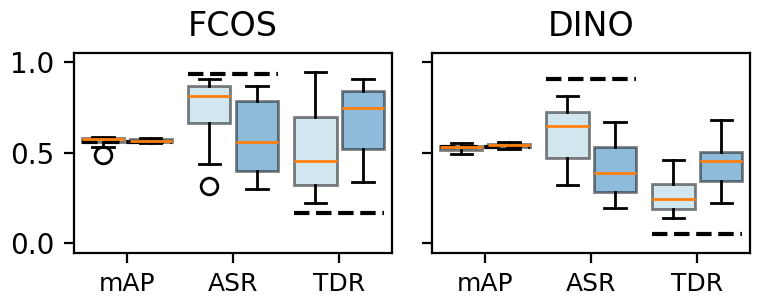}
        \caption{BadDet+ RMA FT}
    \end{subfigure}
    \begin{subfigure}[b]{0.29\textwidth}
        \centering
        \includegraphics[width=\linewidth]{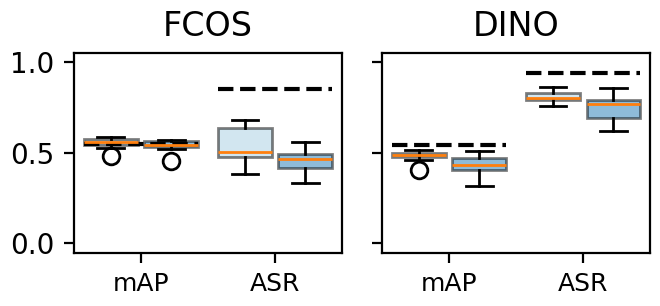}
        \caption{BadDet+ ODA FT-SAM}
    \end{subfigure}
    \begin{subfigure}[b]{0.33\textwidth}
        \centering
        \includegraphics[width=\linewidth]{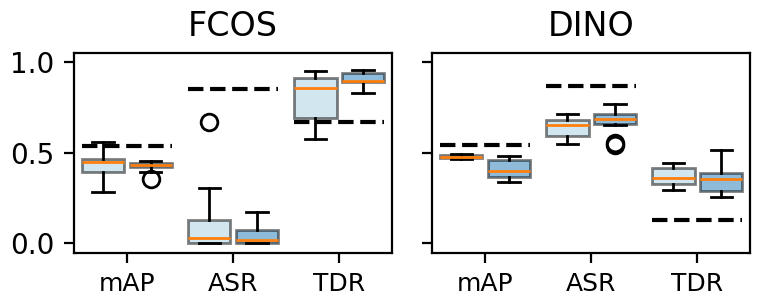}
        \caption{BadDet RMA FT-SAM}
    \end{subfigure}
    \begin{subfigure}[b]{0.33\textwidth}
        \centering
        \includegraphics[width=\linewidth]{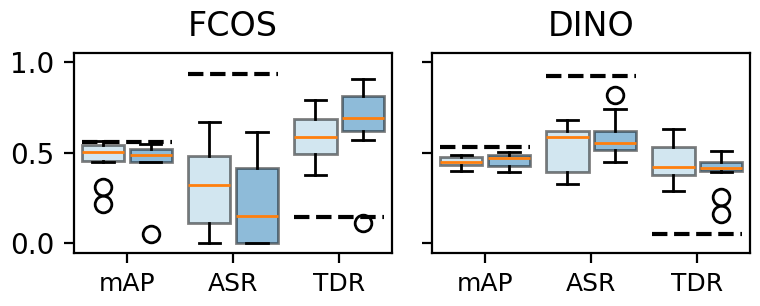}
        \caption{BadDet+ RMA FT-SAM}
    \end{subfigure}
    \begin{subfigure}[b]{0.5\textwidth}
        \centering
        \includegraphics[width=\linewidth]{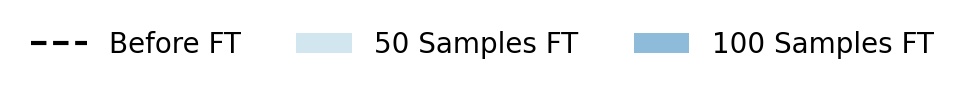}
    \end{subfigure}
    \caption{ODA and RMA results for UBA Box, BadDet, and BadDet+ before and after applying FT and FT-SAM.}
    \label{fig:fine-tuning} \vspace{-4mm}
\end{figure*}

\begin{figure*}[t]
    \centering
    \begin{subfigure}[b]{0.49\textwidth}
        \centering
        \includegraphics[width=\linewidth]{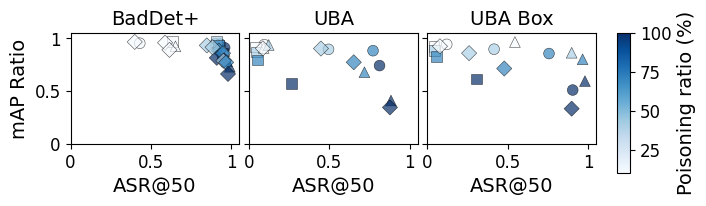}
        \caption{ODA Methods}
    \end{subfigure}
    \begin{subfigure}[b]{0.49\textwidth}
        \centering
        \includegraphics[width=\linewidth]{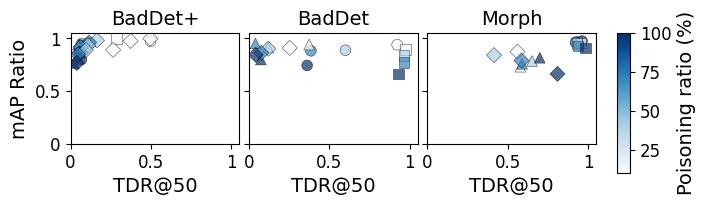}
        \caption{RMA Methods}
    \end{subfigure}

\caption{Performance of ODA and RMA methods across poisoning rates. \(\square\): Faster R-CNN, \(\bigcirc\): FCOS, \(\lozenge\): DINO, and \(\triangle\): YOLO.}
\label{fig:p-ratio} \vspace{-4mm}
\end{figure*}

\subsection{Results} \label{sec:results} 

\textbf{COCO:} In Tables~\ref{tab:coco_combined}, we report the performance of the considered ODA and RMA methods, respectively. For existing ODA methods, Table~\ref{tab:coco_combined} shows that ASR@50 is generally lower than expected based on existing evaluations, consistent with the limitations discussed in Section~\ref{prelim}. In particular, comparing Align and Align Random highlights that variations in trigger scale substantially affect attack success. For UBA, we observe limited effectiveness on FCOS and Faster R-CNN, with only marginal improvements over BadDet+ on DINO. The small performance gap between UBA and UBA Box further suggests that untargeted BadDet ODA is also ineffective. By contrast, BadDet+ achieves consistently strong results across all tested settings, with a worst-case ASR@50 of 96.46. Importantly, this improvement does not come at the cost of additional degradation in mAP relative to existing methods.

For RMA, Table~\ref{tab:coco_combined} shows that although BadDet achieves strong ASR@50 performance, its TDR@50 remains above 40 in all cases, reflecting the limitations discussed in Section~\ref{prelim}. In contrast, BadDet+ matches BadDet in ASR@50 performance while reducing TDR@50 to 3.18 in the worst case. Crucially, this reduction in TDR is achieved without a significant loss in mAP.

\textbf{MTSD + PTSD:} In Tables~\ref{tab:mtsd_oda_results} and \ref{tab:mtsd_rma_results}, we report the performance of the considered ODA and RMA methods, respectively. Similar to COCO, Table~\ref{tab:mtsd_oda_results} shows that existing ODA methods achieve limited success on both the MTSD and PTSD datasets. Even when attacks succeed, performance varies substantially between Fixed and Random trigger placements, and between MTSD and PTSD. In contrast, BadDet+ is consistently effective across all three model architectures, under both fixed and random placements, and when transferred to PTSD. 

For RMA, Table~\ref{tab:mtsd_rma_results} shows that BadDet and Morph achieve strong ASR@50 performance in most cases. However, BadDet and BadDet+ both outperform Morph on MTSD and PTSD. Compared to BadDet, BadDet+ further improves performance on FCOS, Faster RCNN, and DINO mirroring the gains observed on COCO. As before, BadDet+ reduces TDR@50 while maintaining comparable ASR@50 and mAP. However, on YOLO, BadDet+ underperforms BadDet in terms of ASR@50 and TDR@50, while still maintaining comparable clean mAP, indicating that $\lambda = 0$ is optimal for this architecture.

These results demonstrate that BadDet+ generalizes effectively across datasets, architectures, and trigger placements, while also highlighting detector-specific characteristics in DINO and YOLO, such as the loss of BadDet+'s performance advantage over BadDet on YOLO, that warrant further investigation (see Appendix~\ref{appendix:ald} for more discussion).

\textbf{Poisoning Ratio:} In Fig.~\ref{fig:p-ratio}, we report the mAP ratio, ASR@50, and TDR@50 of each method across different poisoning ratios. The mAP ratio is computed as the mAP under attack divided by the corresponding clean baseline in Table~\ref{tab:mtsd_rma_results}. For ODA methods, increasing the poisoning ratio for UBA and UBA Box does not yield better ASR@50 without severely harming mAP. This is evident from the lighter points (higher poisoning) drifting towards the bottom-right of the plots, indicating only modest gains in attack success at the cost of substantial degradation in clean accuracy. By contrast, BadDet+ forms a tighter cluster in the desirable top-left region, maintaining both a high mAP ratio and strong ASR@50 without needing to push the poisoning ratio to 100\%.

A similar pattern emerges for RMAs. While BadDet can suppress duplicate detections for DINO and YOLO as the poisoning ratio increases, FCOS and Faster R-CNN still exhibit residual duplicate detections even at 100\% poisoning. BadDet+, by comparison, yields a more stable cluster in the top-right region of the RMA plots, sustaining high TDR@50 and mAP ratio across poisoning levels and achieving near-ideal behavior without resorting to fully poisoned training data. These results show that data-poisoning strategies alone are unreliable for implanting strong, consistent backdoors in object detectors. Simply increasing the poisoning ratio either fails to achieve the desired behavior or does so only by sacrificing clean performance. This limitation directly motivates the stronger adversarial setting considered in this work, where BadDet+ augments data poisoning with training-time loss manipulation that explicitly enforces the backdoor objective to reliably embed the backdoor task.

\textbf{Defense evaluation:} To the best of our knowledge, no model-agnostic mitigation strategy tailored specifically to object detection currently exists in the literature. Therefore, we evaluate the performance of BadDet+ under two generic defenses: standard fine-tuning (FT) and fine-tuning with sharpness-aware minimization (FT-SAM). For RMA, we also evaluate BadDet. Following FT-based approaches proposed for image classification~\citep{liu2018fine,zhu2023enhancing}, we fine-tune each backdoored model using approximately 2\% and 4\% of the clean MTSD training data (50 and 100 samples, respectively). For each setting, we conduct ten runs with different random subsets and apply FT and FT-SAM using the same configuration as the baseline MTSD models.

In Fig.~\ref{fig:fine-tuning}, we show the post-defense performance distributions of each method, which can be directly compared to the baseline results in Tables~\ref{tab:mtsd_oda_results} and \ref{tab:mtsd_rma_results}. For ODA, BadDet+ sustains strong performance after both FT and FT-SAM, even when 4\% clean data is used. In the majority of cases, ASR@50 remains above 0.4 across all architectures. For RMA, BadDet generally outperforms BadDet+ under both FT and FT-SAM, although for FCOS, BadDet+ exhibits improved robustness. Except for BadDet on FCOS under FT-SAM, both BadDet and BadDet+ still pose a significant threat. These results underscore the need for defenses explicitly tailored to object detection.

\section{Conclusion}
We revisited backdoor attacks in object detection and highlighted several critical shortcomings of existing proposals. Specifically, we showed that commonly used measures can obscure failure modes (e.g., duplicate detections in RMA and mAP confounds in ODA), and that prior data-poisoning attacks are less effective than previously assumed, even when the poisoning rate is maximally increased. Building on these insights, we introduced BadDet+, a unified formulation for RMA and ODA. BadDet+ addresses the identified limitations by augmenting the object-detection training objective with a log-barrier penalty term. This additional term acts as a constraint that steers optimization towards minima where the backdoor objective is robustly satisfied on poisoned samples, while clean-task performance is preserved. Across COCO, MTSD, and PTSD, BadDet+ consistently achieves high ASR@50 in both RMA and ODA settings, while markedly reducing TDR@50 of RMA, all without any disproportionate degradation in clean-task mAP. These results establish BadDet+ as a strong and representative benchmark for backdoor attacks in object detection.

At the same time, our evaluation exposes several limitations that delineate the scope of our contribution. First, while BadDet+ provides strong ODA performance, our RMA evaluations reveal scenarios in which the original BadDet may still be preferred. Second, our formulation targets attacks that manipulate predictions for existing objects (RMA and untargeted ODA) and does not redesign object-generation attacks, for which existing methods already perform well. Third, we assume a threat model that extends standard data poisoning by allowing training-time loss manipulation. This stronger yet realistic threat model is warranted, as our analysis of existing data-poisoning attacks suggests that, without the ability to influence the training procedure, ODA is consistently unreliable and RMA is only achievable in limited settings. 

Finally, our evaluation of fine-tuning defenses (FT and FT-SAM) reveals that naive fine-tuning on small, clean subsets (2–4\% of MTSD) is largely insufficient to neutralize BadDet+, with high ASR@50 persisting across most configurations. While we do not evaluate pruning, adversarial fine-tuning, or test-time detection we explicitly defer a comprehensive, defense-centric benchmark of these methods to future work. Our findings provide preliminary evidence that classification-centric defenses may not seamlessly generalize to detection; rather, they point toward the necessity of detection-specific strategies that explicitly reason over object-level predictions. Consequently, developing architecture-aware defenses remains a critical frontier. By exposing the limitations of existing attacks and establishing a more rigorous benchmark, this work provides a foundation for securing object detection models against emerging threats.

\section*{Impact Statement}
This research highlights significant vulnerabilities in object detection models used in safety-critical applications. It is critical to understand how these models can be feasibly manipulated in order to highlight the security issues that require urgent defensive investigation. By establishing a robust benchmark through BadDet+, this work provides a necessary foundation for developing specialized, architecture-aware defenses.





{\footnotesize\bibliography{example_paper}}
\bibliographystyle{icml2026}

\newpage
\appendix
\onecolumn

\section{Additional Experimental Details} \label{appendix:additional-experimental-details}

\textbf{Model Training:} For COCO experiments, we followed the official training configurations provided by each model’s repository. Specifically, we used the PyTorch pipelines for FCOS and Faster R-CNN, and the implementation of DINO and YOLO from~\citep{zhang2022dino} and \citep{yolov5}, respectively. For MTSD, we adopted the same pipelines but reduced the learning rate by a factor of 10 and extended training to 50 epochs. Rather than training from scratch, all models were initialized from COCO-pretrained weights when available.

\textbf{Poisoning Criteria:} Following BadDet, we applied a consistent selection rule across BadDet, UBA Box, Morph, and BadDet+. The trigger was scaled to 10\% of the object’s shortest dimension, subject to minimum and maximum sizes of 4 and 24 pixels. Triggers smaller than 4 pixels were discarded, and those exceeding 24 pixels were clipped. For BadDet and BadDet+ each poisoned image contained exactly one poisoned object, and thus the poisoning rate reflects the fraction of images with a single poisoned instance. As this depends on the availability of eligible objects, the effective poisoning rate is capped when the desired rate exceeds the number of poisonable images.

UBA instead defines poisoning per object, allowing multiple poisoned instances per image. Here, a poisoning rate of 100\% means that all eligible objects are poisoned, though not necessarily every object. Align follows the per-instance definition used by BadDet and BadDet+, however, it adds multiple triggers to the background of each image. Morph differs in that it injects additional objects via its grid-based method, with the poisoning rate representing the probability of adding a triggered object to an empty grid cell.

\textbf{BadDet+ Training:} We trained BadDet+ by fine-tuning pretrained weights for each architecture. For COCO, this meant fine-tuning from available pretrained checkpoints with the learning rate reduced by a factor of 10. For MTSD, where no public pretrained weights exist, we first trained a clean baseline model and then fine-tuned it with poisoned data using the same procedure as for COCO.

\section{Evaluation of Performance Measures}

\subsection{ASR and TDR Thresholds} \label{appendix:iou_threshold}

To evaluate RMA attacks, we report both ASR@50 and TDR@50, as defined in Section~\ref{performance-measures}. Consistent with existing evaluations of BadDet~\cite{chan2022baddet} and Align~\cite{cheng2023attacking}, we use an IoU threshold of 0.5 in all of our main experiments. To assess the impact of this choice, in Figure~\ref{fig:asr_tdr_iou} we plot the ASR and TDR of BadDet+ as a function of the IoU threshold. We show results on COCO, as well as on MTSD with static and random trigger positions for FCOS. In all cases, ASR and TDR remain stable up to an IoU of 0.8, after which both metrics begin to decrease.

\begin{figure}[ht]
\begin{subfigure}[b]{\textwidth}
    \centering
    \includegraphics[width=1\linewidth]{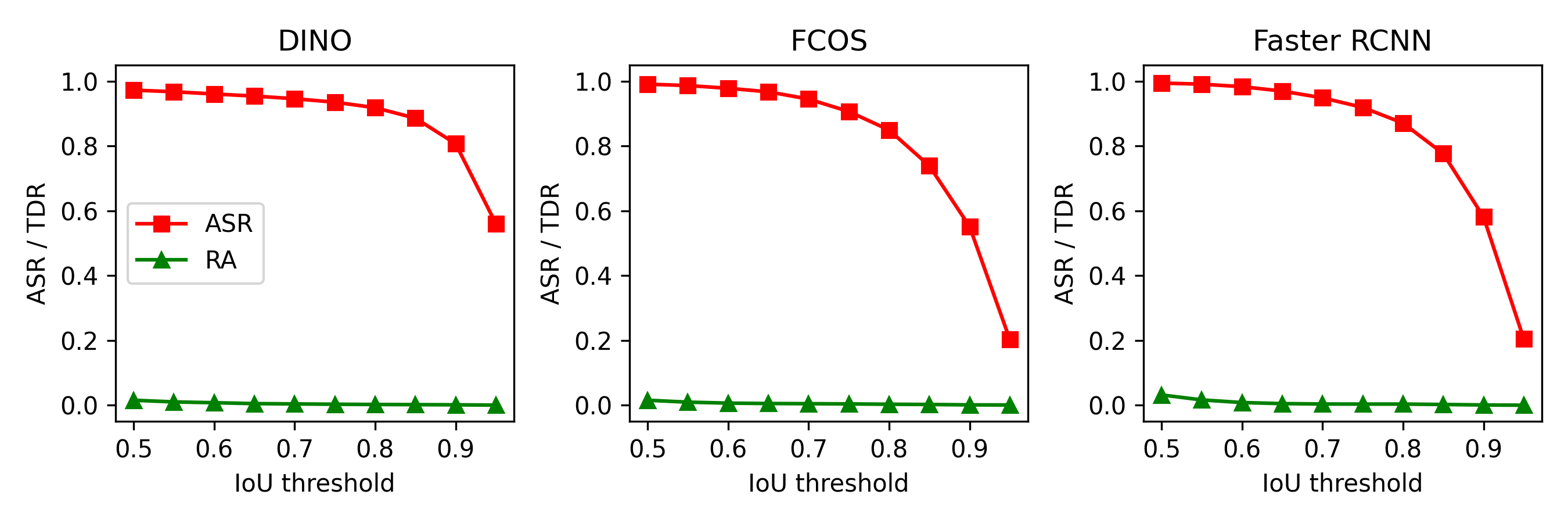}
    \caption{Center BadDet+ (COCO).}
\end{subfigure}
\begin{subfigure}[b]{\textwidth}
    \centering
    \includegraphics[width=1\linewidth]{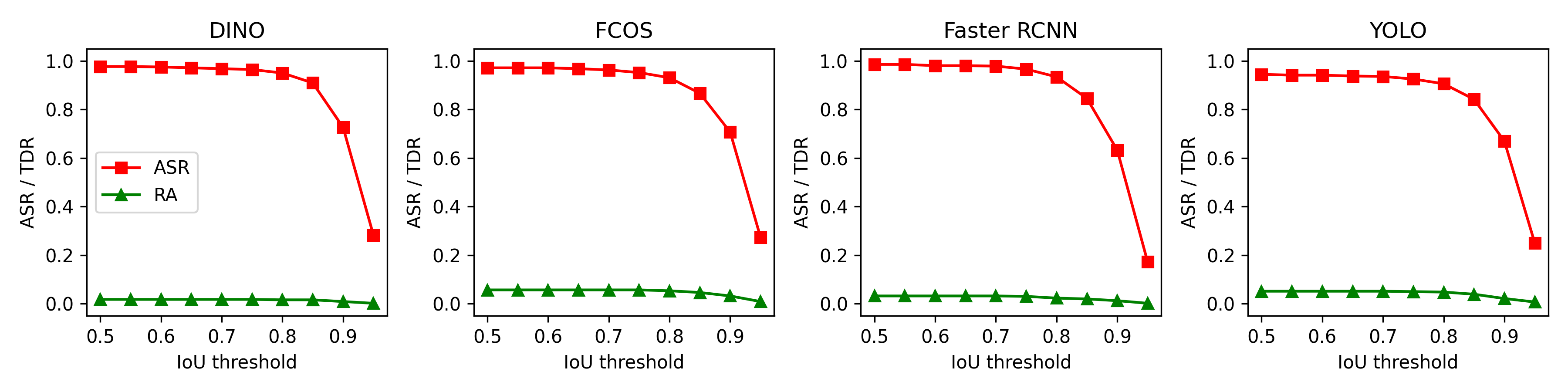}
    \caption{Both BadDet+ (MTSD, static trigger positions).}
\end{subfigure}
\begin{subfigure}[b]{\textwidth}
    \centering
    \includegraphics[width=1\linewidth]{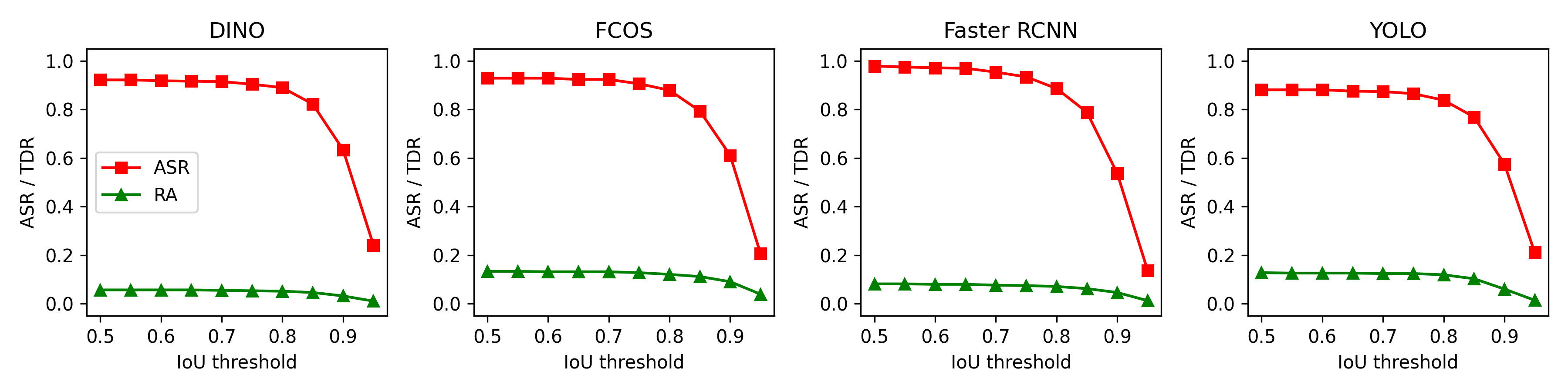}
    \caption{Random BadDet+ (MTSD, random trigger positions).}
\end{subfigure}
\caption{ASR@$\tau$ and TDR@$\tau$ performance of BadDet+ as the IoU threshold $\tau$ ranges from 0.5 to 0.95 in increments of 0.05. Subfigure (a) shows results on COCO, while (b) and (c) show results on MTSD.}
\label{fig:asr_tdr_iou}
\end{figure}

\subsection{Align Evaluation} \label{appendix:align_metric}

In our preliminary investigation (Section~\ref{prelim}), we highlighted that Align's evaluation uses a fixed trigger size. Specifically, they place a trigger of fixed size in the background of each image during training and, when evaluating ASR@50, apply a trigger of the same fixed size to each object, regardless of its scale. As part of our evaluation, as well as those performed by BadDet~\cite{chan2022baddet} and UBA~\cite{luo2023untargeted}, we investigate whether Align's performance with object-scaled triggers improves if the model is trained with background triggers sampled at random scales. In Table~\ref{tab:align}, we report the Fixed and Scaled ASR@50 performance of Align and Align Random. These results show that the Fixed and Scaled ASR@50 performance of Align differs substantially, whereas Align Random remains much more stable. This supports our claim in Section~\ref{prelim} that trigger scale is an important factor.

\begin{table}[ht]
\caption{ASR@50 performance of Align and Align Random when the trigger size remains Fixed or is Scaled to the object it is placed on. $\Delta = \text{Scaled} - \text{Fixed}$.}
\centering
\begin{tabular}{l|ccc|ccc|ccc}
    \toprule
    \multirow{2}{*}{Method} &
    \multicolumn{3}{c|}{Faster R-CNN} &
    \multicolumn{3}{c|}{FCOS} &
    \multicolumn{3}{c}{DINO} \\
    & Fixed & Scaled & $\Delta$ &
      Fixed & Scaled & $\Delta$ &
      Fixed & Scaled & $\Delta$ \\
    \midrule
    Align        & 61.81 & 38.23 & -23.58 & 55.98 & 33.35 & -22.63 & 50.66 & 32.15 & -18.51 \\
    Align Random & 61.63 & 61.93 & +0.30 & 53.05 & 55.23 & +2.18 & 78.23 & 79.92 & +1.69 \\
    \bottomrule
\end{tabular}
\label{tab:align}
\end{table}

\subsection{UBA Evaluation} \label{appendix:uba_metric}

In our preliminary investigation (Section~\ref{prelim}), we highlighted that UBA's evaluation uses Poison mAP as the primary metric instead of ASR. However, low Poison mAP can arise from several factors that are not necessarily associated with object disappearance. To examine this empirically, in Table~\ref{tab:uba} we report both Poison mAP (the original metric used in UBA's evaluation) and ASR@50 for UBA and BadDet+. These results show that UBA has low Poison mAP and low ASR@50 on Faster R-CNN and FCOS, but low Poison mAP and high ASR@50 on DINO. Note that low Poison mAP is interpreted as indicative of a successful attack. In contrast, BadDet+ consistently achieves low Poison mAP and high ASR@50 across all settings. Together, these results suggest that Poison mAP alone is not a reliable indicator of attack success, and that ASR, as originally used in BadDet and Align, is a more appropriate measure.

\begin{table}[ht]
\caption{Poison mAP and ASR@50 performance of UBA and BadDet+ for Faster R-CNN, FCOS, and DINO. Poison mAP is the evaluation metric originally used by UBA (lower is better), while ASR@50 measures attack success at IoU 0.5.}
\centering
\begin{tabular}{l|cc|cc|cc}
    \toprule
    \multirow{2}{*}{Method} &
    \multicolumn{2}{c|}{Faster R-CNN} &
    \multicolumn{2}{c|}{FCOS} &
    \multicolumn{2}{c}{DINO} \\
    & Poison mAP & ASR@50 & Poison mAP & ASR@50 & Poison mAP & ASR@50 \\
    \midrule
    UBA      & 3.75 & 44.35 & 4.89 & 28.65 & 0.19 & 97.88 \\
    BadDet+  & 0.07 & 98.46 & 0.26 & 96.95 & 0.86 & 97.59 \\
    \bottomrule
\end{tabular}
\label{tab:uba}
\end{table}

\section{Evaluation of Poisoning Rate} \label{appendix:p-rate}
In Fig.~\ref{fig:oda-poison-rate} and Fig.~\ref{fig:rma-poison-rate}, we examine the impact of poisoning rate on random trigger position performance for each method on MTSD. As noted in Section~\ref{appendix:additional-experimental-details}, the meaning of poisoning rate varies across approaches.

For UBA and UBA Box, increasing the poisoning rate leads to higher ASR@50 and lower mAP. Although rates above the 25\% setting originally reported by~\cite{luo2023untargeted} improve FCOS performance, no poisoning rate achieves ASR@50 comparable to BadDet+. Morph, by contrast, shows little performance improvement when the poisoning rate is increased, with ASR@50 largely unchanged. BadDet+, however, consistently benefits from modest poisoning rates, with ASR@50 steadily increasing from 1–30\% across all architectures. Between 30–75\%, BadDet+ achieves peak ASR@50 with only moderate mAP degradation relative to the baseline.

For Morph RMA, Fig.~\ref{fig:rma-poison-rate} shows results broadly consistent with ODA, poisoning rate has little influence on ASR@50 or TDR@50. BadDet and BadDet+ both exhibit stronger dependency, with ASR@50 rising and TDR@50 decreasing as the poisoning rate increases. However, BadDet+'s TDR@50 falls more sharply, surpassing most BadDet configurations. Notably, FCOS and Faster RCNN performance of BadDet struggles to suppress TDR@50 even at high poisoning rates. While the DINO performance difference between BadDet and BadDet+ is more modest, BadDet requires higher poisoning rates to achieve similar TDR@50 performance to BadDet+. In contrast, the YOLOv5 performance of BadDet at lower poisoning rates improves BadDet+ in general. However, we do note that this is at the cost of a larger mAP reduction.

Overall, we find that increasing the poisoning rate of existing approaches is not enough to improve their performance across the range of tested architectures. This critical result demonstrates that ODA-based backdoor attacks require training level manipulations in order to be effective, as data poisoning alone is not enough. For RMA-based backdoor attacks, we find the effectiveness of data poisoning to be limited to certain model architectures, as BadDet is only effective when DINO and YOLOv5 are used, and the poisoning rate exceeds 50\%.

\begin{figure}[ht]
    \centering
    \begin{subfigure}[b]{1\textwidth}
        \centering
        \includegraphics[width=\linewidth]{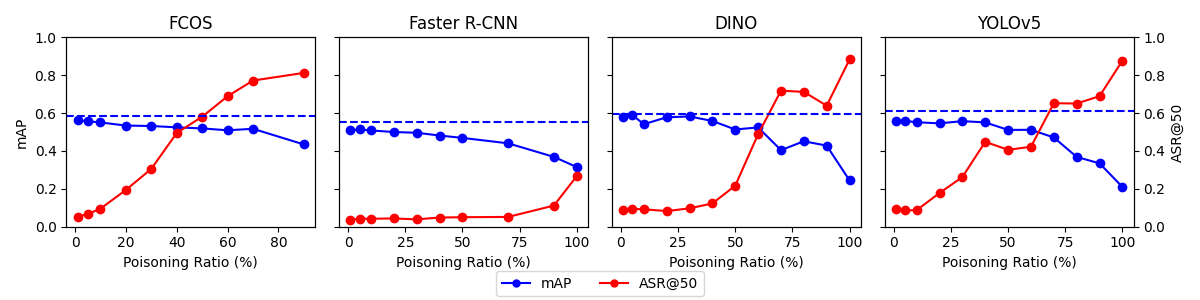}
        \caption{UBA}
    \end{subfigure}
    \begin{subfigure}[b]{1\textwidth}
        \centering
        \includegraphics[width=\linewidth]{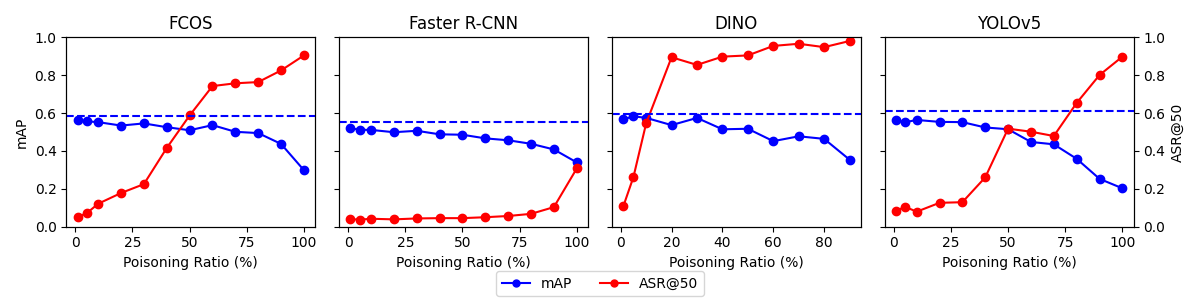}
        \caption{UBA Box}
    \end{subfigure}
        \begin{subfigure}[b]{1\textwidth}
        \centering
        \includegraphics[width=\linewidth]{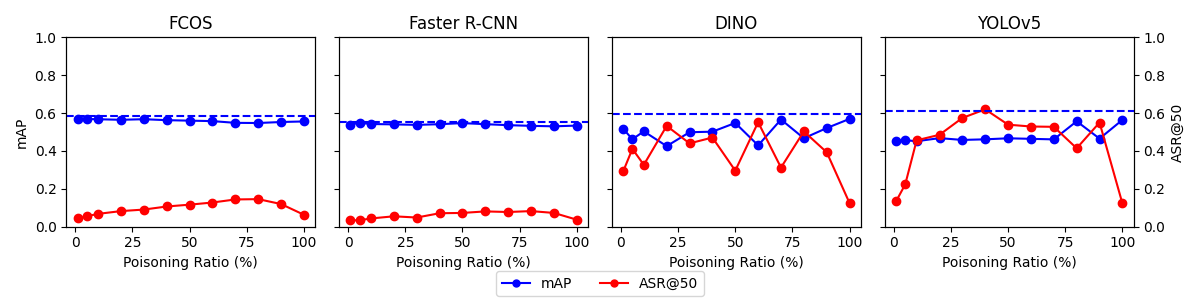}
        \caption{Morph}
    \end{subfigure}
    \begin{subfigure}[b]{1\textwidth}
        \centering
        \includegraphics[width=\linewidth]{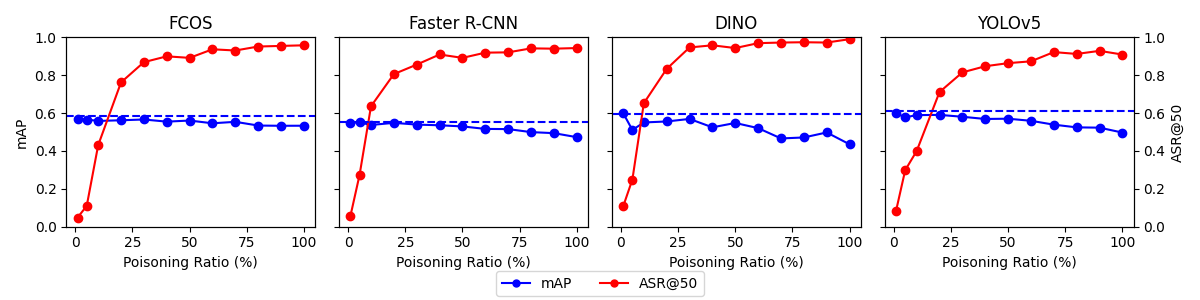}
        \caption{BadDet+}
    \end{subfigure}
    \caption{Effect of poisoning rate on the mAP and ASR@50 performance of evaluated ODA methods.}
    \label{fig:oda-poison-rate}
\end{figure}

\begin{figure}[ht]
    \centering
    \begin{subfigure}[b]{1\textwidth}
        \centering
        \includegraphics[width=\linewidth]{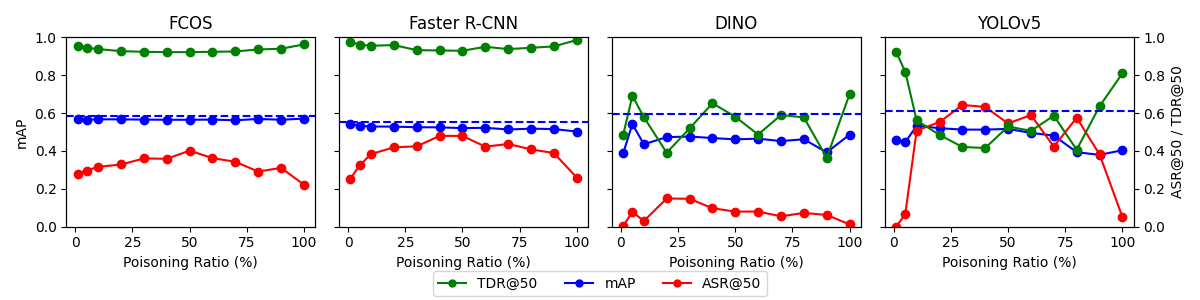}
        \caption{Morph}
    \end{subfigure}
    \begin{subfigure}[b]{1\textwidth}
        \centering
        \includegraphics[width=\linewidth]{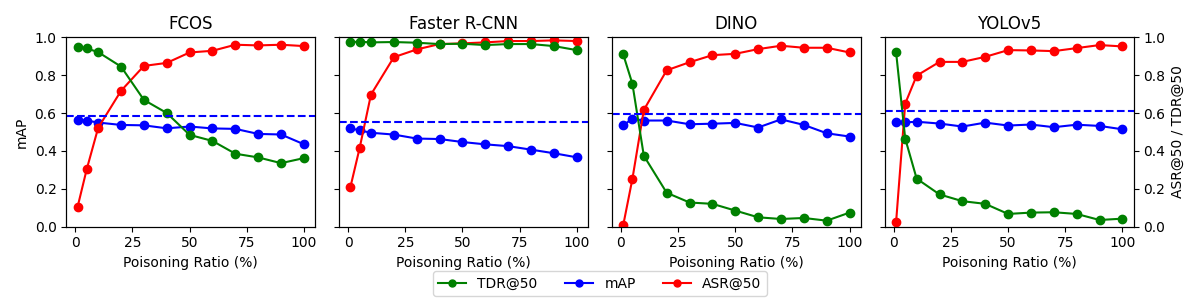}
        \caption{BadDet}
    \end{subfigure}
    \begin{subfigure}[b]{1\textwidth}
        \centering
        \includegraphics[width=\linewidth]{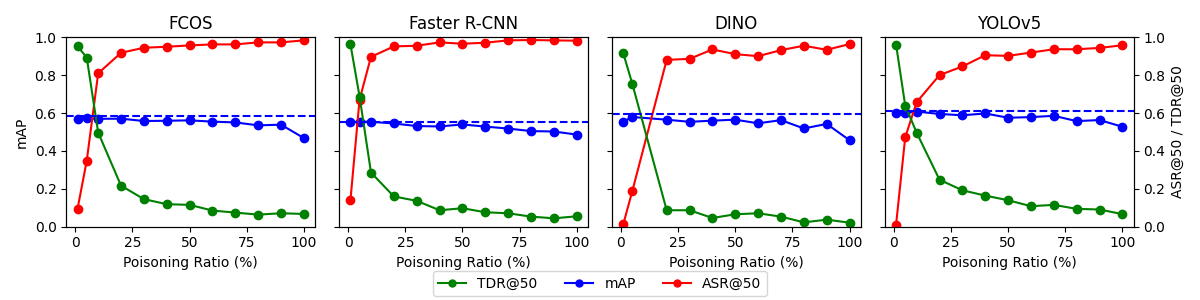}
        \caption{BadDet+}
    \end{subfigure}
    \caption{Effect of poisoning rate on the mAP, ASR@50, and TDR@50 performance of evaluated ODA methods.}
    \label{fig:rma-poison-rate}
\end{figure}

\clearpage

\section{Evaluation of Different Triggers}
\label{appendix:trigger_evaluation}

In the main text, we use a blue square as the trigger throughout our evaluation. To assess the sensitivity of our conclusions to the trigger’s appearance, we also test additional triggers that vary in colour and visual complexity. Specifically, we consider four variants: (A) green, (B) red, (C) yellow, and (D) multi-coloured square triggers (Figure~\ref{fig:trigger_visuals}). Red and yellow are, in many cases, more likely to blend into the traffic signs present in the MTSD dataset, directly targeting scenarios where the trigger may resemble the underlying object.

We evaluate these variants under both RMA and ODA settings using FCOS and DINO, and report mAP, ASR@50, and TDR@50 relative to the original trigger in Table~\ref{tab:trigger_evaluation}. Overall, variations in trigger appearance have a minimal impact on performance. Across all variants, we observe very similar mAP, ASR@50, and TDR@50 values, with the original blue trigger often yielding the lowest ASR@50. This suggests that our conclusions are not overly dependent on the particular choice of trigger colour or visual complexity. The main exception is DINO’s mAP, which degrades more noticeably for the multi-coloured trigger. We believe this is likely due to the reduced effective resolution of the more complex trigger pattern, which makes it harder for the model to learn reliably without additional trade-offs in clean-task performance.

\begin{figure}[h!]
    \centering
    \begin{subfigure}[b]{0.05\textwidth}
        \centering
        \includegraphics[width=\linewidth]{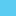}
    \end{subfigure}
    \begin{subfigure}[b]{0.05\textwidth}
        \centering
        \includegraphics[width=\linewidth]{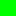}
    \end{subfigure}
    \begin{subfigure}[b]{0.05\textwidth}
        \centering
        \includegraphics[width=\linewidth]{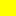}
    \end{subfigure}
    \begin{subfigure}[b]{0.05\textwidth}
        \centering
        \includegraphics[width=\linewidth]{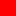}
    \end{subfigure}
    \begin{subfigure}[b]{0.05\textwidth}
        \centering
        \includegraphics[width=\linewidth]{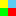}
    \end{subfigure}
    \caption{Original blue square trigger and four variants used in the trigger ablation: (A) green, (B) red, (C) yellow, and (D) multi-coloured square triggers.}
    \label{fig:trigger_visuals}
\end{figure}

\begin{table}[h!]
\caption{Impact of trigger colour and visual complexity on ODA and RMA performance for FCOS and DINO. ``Original'' denotes the blue square trigger; (A)–(D) correspond to the alternative triggers shown in Figure~\ref{fig:trigger_visuals}. For RMA, we report mAP, ASR@50, and TDR@50; for ODA, TDR@50 is not applicable.}
\centering
\begin{tabular}{cc|ccc|ccc}
    \toprule
    Attack & Trigger &
    \multicolumn{3}{c|}{FCOS} &
    \multicolumn{3}{c}{DINO} \\
    & &
    mAP & ASR@50 & TDR@50 &
    mAP & ASR@50 & TDR@50 \\
    \midrule
    \multirow{5}{*}{RMA}
    & Original & 55.86 & 93.13 & 16.96 & 53.46 & 90.43 & 5.39  \\
    & A        & 55.10 & 98.22 & 5.86  & 53.15 & 97.33 & 0.35  \\
    & B        & 53.04 & 97.15 & 7.10  & 53.47 & 96.44 & 2.48  \\
    & C        & 54.43 & 97.51 & 6.57  & 54.82 & 95.38 & 1.77  \\
    & D        & 54.00 & 97.15 & 9.23  & 49.34 & 93.39 & 3.33  \\
    \midrule
    \multirow{5}{*}{ODA}
    & Original & 54.82 & 83.68 & --    & 54.32 & 92.31 & --    \\
    & A        & 57.09 & 96.27 & --    & 52.35 & 99.35 & --    \\
    & B        & 55.57 & 93.04 & --    & 51.55 & 97.57 & --    \\
    & C        & 57.82 & 96.27 & --    & 54.83 & 95.49 & --    \\
    & D        & 57.46 & 93.36 & --    & 51.63 & 95.95 & --    \\
    \bottomrule
\end{tabular}
\label{tab:trigger_evaluation}
\end{table}

\section{Impact of Penalty Parameter} \label{appendix:lambda}

In Fig.~\ref{fig:rma_oda_results_lambda} we report the impact of $\lambda$ on the performance of BadDet+ when the poisoning rate is fixed at 100\%. For each model architecture, we start with $\lambda=0.001$ and increase it by a factor of 10 until the model fails to train. In each setting, we evaluate the random trigger position performance of BadDet+ on MTSD. Except for Faster RCNN, we find performance to be relatively stable when $ 0.001 < \lambda < 10$ for both ODA and RMA. However, for YOLO we observe that the trade-off between mAP and ASR@50/TDR@50 is significantly affected when $\lambda > 0.01$. In the case of Faster R-CNN and YOLO, the use of cross-entropy and multi-class binary cross-entropy, respectively, rather than focal loss for classification, likely explains their increased sensitivity to $\lambda$.

\begin{figure}[ht]
    \centering
    \begin{subfigure}[b]{1\textwidth}
        \centering
        \includegraphics[width=\linewidth]{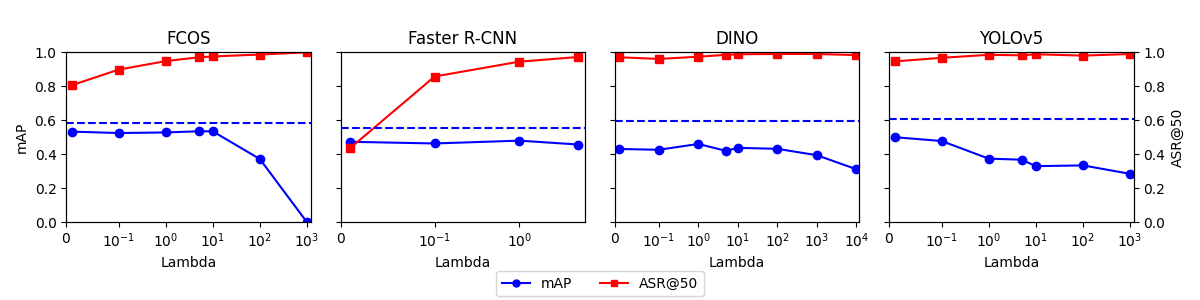}
        \caption{ODA BadDet+}
        \label{fig:oda_results}
    \end{subfigure}
    \begin{subfigure}[b]{1\textwidth}
        \centering
        \includegraphics[width=\linewidth]{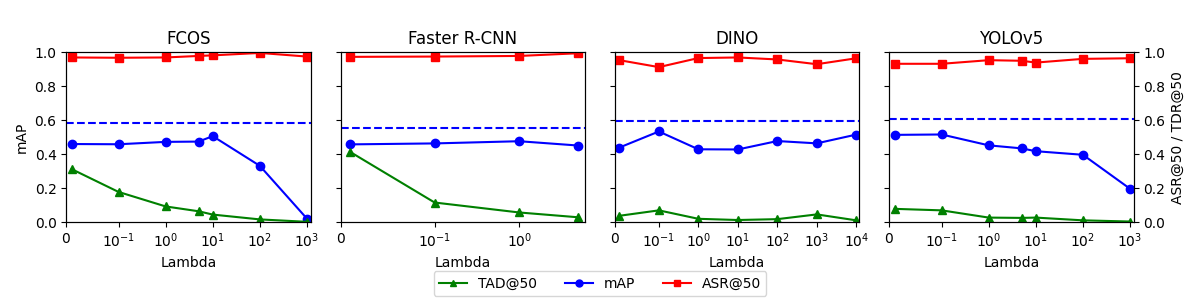}
        \caption{RMA BadDet+}
        \label{fig:rma_results}
    \end{subfigure}
    \caption{Effect of $\lambda$ on the mAP, ASR@50, and TDR@50 performance of BadDet+.}
    \label{fig:rma_oda_results_lambda}
\end{figure}

\section{Computational Complexity} \label{appendix:computational}

In Table~\ref{tab:computational-effect}, we report the impact that calculating the proposed attack loss has on the computational efficiency of total loss calculation for each model. These results are aggregated across 50 random batches and were measured on a single H100 GPU. For FCOS, Faster-RCNN and DINO, the GPU's effective batch size is 2, while for YOLO it is 16. Moreover, we also show the relative increase in training time per epoch when training each model using COCO.

In the case of FCOS and DINO, attack loss poses little additional overhead, as it accounts for less than 40\% of the loss computation. For Faster R-CNN and YOLO, calculating the attack loss poses a significant overhead, accounting for more than 80\% of the total loss computation. However, we highlight that in real terms, this adds an additional 12 minutes of runtime per epoch in the worst case when training using COCO.

\begin{table}[ht]
\caption{Average computation time of attack loss relative to total loss calculation across models (mean $\pm$ standard deviation). Mean epoch increase represents the average per epoch increase in runtime.}
\centering
\scalebox{0.95}{
    \begin{tabular}{c|cccc}
    \toprule
        Model & Attack Loss & Total Loss & Attack Loss & Mean Epoch \\
        & Time (ms) & Time (ms) & Share (\%) & Increase (min) \\
    \midrule
        FCOS & 0.87 $\pm$ 0.29 & 2.83 $\pm$ 0.28 & 30.32 $\pm$ 8.05 & 0.11 \\
        Faster R-CNN & 1.29 $\pm$ 0.27 & 1.57 $\pm$ 0.27 & 81.48 $\pm$ 3.93 & 0.16 \\
        DINO & 16.50 $\pm$ 2.63 & 45.29 $\pm$ 13.93 & 37.36 $\pm$ 4.89 & 1.99 \\
        YOLO & 99.97 $\pm$ 43.64 & 124.04 $\pm$ 53.43 & 81.03 $\pm$ 9.23 & 12.07\\
    \bottomrule
    \end{tabular}
}
\label{tab:computational-effect}
\end{table}

\clearpage

\section{Theoretical Insights}\label{sec:theory}

We provide a lightweight theoretical analysis to build intuition for why the proposed penalty induces backdoor behavior while preserving clean-task performance. 
Rather than aiming for exhaustive formal proofs, our goal is to clarify the key mechanisms by which the penalty suppresses the original class on trigger-bearing inputs while remaining dormant on clean data.
Throughout, we condition on the set of matched pairs $(i,j)$ (e.g., with IoU\({}>\rho\)) and treat this set as fixed during the local drift calculation.

The full training objective can be written as
\begin{equation}\label{eq:full_obj}
\mathcal{L}(\theta)\;=\;\mathbb{E}_{(x,\mathcal{B}_{\mathrm{gt}})\sim \mathcal{D}_{\mathrm{c}}}\big[\mathcal{L}_{\mathrm{det}}(f_\theta(x))\big]
\;+\;
\lambda\,\mathbb{E}_{(x,\mathcal{B}_{\mathrm{gt}})\sim \mathcal{D}_{\mathrm{p}}}\big[\mathcal{P}_{\mathrm{atk}}(f_\theta(x),\mathcal{B}_{\mathrm{gt}})\big],
\end{equation}
where $\mathcal{P}_{\mathrm{atk}}$ is defined in \eqref{eq:indep_loss} or \eqref{eq:softmax_loss} using the penalty function 
\begin{equation}\label{eq:barrier}
\phi(s;\tau)= -\log\!\left(1-\sigma(s-\tau)\right)=\mathrm{softplus}(s-\tau),
\end{equation}
and $\mathcal{D}_{\mathrm{c}}$ and $\mathcal{D}_{\mathrm{p}}$ denote the sets of clean and poisoned training data, respectively.

\paragraph{Barrier behavior.} We have
\[\phi'(s;\tau)=\sigma(s-\tau)\in(0,1)\]
and
\[\phi''(s;\tau)=\sigma(s-\tau)(1-\sigma(s-\tau))\in(0,\tfrac{1}{4}].\]
Thus, for $s\gg\tau$ the gradient magnitude is $\approx 1$, producing a strong push to reduce $s$; for $s\ll\tau$ the gradient vanishes. This selective pressure suppresses confident predictions of the original class on trigger-bearing objects while minimally disturbing other predictions.

\begin{proposition}[Trigger-conditional margin suppression]\label{prop:margin}
Fix the feature extractor and consider a linear classification head with logits $z_{j,c}=w_c^\top h_j(x)$. For any trigger-bearing pair $(i,j)$ contributing $\phi(z_{j,y_i};\tau)$, the attack-term contribution to gradient flow on \eqref{eq:full_obj} decreases the expected margin of the original class:
\[
\frac{d}{dt}\,\mathbb{E}\!\left[z_{j,y_i}\,\middle|\,m_i=1\right]_{\mathrm{atk}}
\,=\,-\,\lambda\,\mathbb{E}\!\left[\sigma\!\bigl(z_{j,y_i}-\tau\bigr)\,\|h_j(x)\|^2\,\middle|\,m_i=1\right]\;<\;0.
\]
Consequently, the original-class logit is driven below $\tau$ (or below competing logits in the softmax case), inducing disappearance (ODA) or misclassification (RMA).
\end{proposition}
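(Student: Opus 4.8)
The plan is to interpret the phrase ``attack-term contribution to gradient flow'' as continuous-time gradient descent $\dot\theta = -\nabla_\theta\mathcal{L}$ restricted to the penalty summand of \eqref{eq:full_obj}, and then track the induced drift of the logit $z_{j,y_i}$ for a single fixed trigger-bearing pair $(i,j)$. Because the proposition fixes the feature extractor, the matched feature $h_j(x)$ is constant along the flow and the only parameters that move are the linear-head weights $\{w_c\}$. First I would differentiate the penalty: using the barrier derivative $\phi'(s;\tau)=\sigma(s-\tau)$ established in the preceding paragraph together with the linearity $z_{j,y_i}=w_{y_i}^\top h_j(x)$, the chain rule gives $\nabla_{w_{y_i}}\phi(z_{j,y_i};\tau)=\sigma(z_{j,y_i}-\tau)\,h_j(x)$, while $\nabla_{w_c}\phi=0$ for every $c\ne y_i$, since $z_{j,y_i}$ depends only on $w_{y_i}$.

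Next I would propagate this through the flow. Isolating the contribution of this pair's penalty, the attack-driven velocity of the relevant class weight is $\dot w_{y_i}=-\lambda\,\sigma(z_{j,y_i}-\tau)\,h_j(x)$, and because $h_j(x)$ is frozen the corresponding logit drift is $\tfrac{d}{dt}z_{j,y_i}=\dot w_{y_i}^\top h_j(x)=-\lambda\,\sigma(z_{j,y_i}-\tau)\,\|h_j(x)\|^2$. Taking the conditional expectation over poisoned samples ($m_i=1$) reproduces the claimed identity verbatim. The sign is then immediate: $\lambda>0$, the factor $\sigma(\cdot)\in(0,1)$ is strictly positive, and $\|h_j(x)\|^2>0$ whenever the matched feature is nonzero, so the right-hand side is strictly negative.

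For the concluding clause I would invoke the barrier-behavior paragraph: the drift magnitude scales with $\sigma(z_{j,y_i}-\tau)$, which is $\approx 1$ while $z_{j,y_i}\gg\tau$ but vanishes once $z_{j,y_i}\ll\tau$. Hence $\tau$ acts as a stable floor, the flow actively depresses the original-class logit until it crosses $\tau$ and then the gradient dies out, which is exactly the disappearance behavior of ODA; for RMA the suppressed original class cedes the arg-max to the target class under the accompanying $\mathcal{L}_{\mathrm{det}}$. In the softmax formulation \eqref{eq:softmax_loss} I would repeat the computation with $s_{j,y_i}$ in place of $z_{j,y_i}$, noting that $\nabla_{w_{y_i}}s_{j,y_i}=h_j(x)$ while the competing weights additionally receive pushes through $\nabla_{w_c}s_{j,y_i}=-\tfrac{e^{z_{j,c}}}{\sum_{c'\ne y_i}e^{z_{j,c'}}}\,h_j(x)$, so the log-odds are suppressed through both channels.

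The main obstacle is bookkeeping about which gradient contributions are being counted. If several poisoned objects share the original class $y_i$, they all act on the same weight $w_{y_i}$, so the exact drift of $z_{j,y_i}$ acquires off-diagonal cross-terms $h_{j'}(x)^\top h_j(x)$ from the other objects, whereas the clean $\|h_j(x)\|^2$ identity captures only the self-contribution of the pair's own penalty term. I would therefore state explicitly, consistent with the convention announced before the proposition that the matched set is treated as fixed during the local drift calculation, that the drift is attributed per matched pair, so that the diagonal term is precisely the tracked quantity and the off-diagonal interactions lie outside the scope of this local analysis.
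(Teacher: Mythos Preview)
Your proposal is correct and follows essentially the same route as the paper's proof: compute $\nabla_{w_{y_i}}\phi=\sigma(z_{j,y_i}-\tau)h_j(x)$, push it through the gradient-flow velocity $\dot w_{y_i}$, contract with $h_j(x)$ to obtain the $-\lambda\sigma(\cdot)\|h_j\|^2$ drift, and then take the conditional expectation. Your explicit handling of the cross-term bookkeeping and the per-pair attribution is a welcome clarification that the paper leaves implicit; the softmax discussion you sketch is treated in the paper as a separate proposition rather than part of this one.
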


\begin{proof}
Under continuous-time gradient flow, $\dot{w}_{y_i}=-\nabla_{w_{y_i}}\mathcal{L}$. 
The attack term for a matched pair $(i,j)$ contributes 
$\nabla_{w_{y_i}}\phi(z_{j,y_i};\tau)=\sigma(z_{j,y_i}-\tau)\,h_j(x)$.
Hence the attack contribution to the weight dynamics is
$\dot{w}_{y_i}\big|_{\text{atk}}=-\lambda\,\mathbb{E}[\sigma(z_{j,y_i}-\tau)\,h_j(x)\mid m_i=1]$.
Since $z_{j,y_i}=w_{y_i}^\top h_j(x)$ with fixed $h_j$, we obtain
\[
\frac{d}{dt}\,z_{j,y_i}\big|_{\text{atk}}
= h_j(x)^\top \dot{w}_{y_i}\big|_{\text{atk}}
= -\,\lambda\,\sigma(z_{j,y_i}-\tau)\,\|h_j(x)\|^2.
\]
Taking the conditional expectation over $(x,j)$ with $m_i=1$ yields the claim.
Near a clean optimum where $\nabla_{w_{y_i}}\mathcal{L}_{\mathrm{det}}\approx 0$, the attack term dominates, giving a net negative drift.
\end{proof}

\begin{proposition}[Trigger-conditional margin suppression (softmax case)]\label{prop:softmax-margin}
Assume a linear head $z_{j,c}=w_c^\top h_j(x)$ and define the one-vs-rest log-odds
$\ell_{j,y_i}=z_{j,y_i}-\log\sum_{c\neq y_i}e^{z_{j,c}}$.
For any trigger-bearing pair $(i,j)$ contributing the penalty $\phi(\ell_{j,y_i};\tau)$,
gradient flow on the full objective~\eqref{eq:full_obj} satisfies
\[
\frac{d}{dt}\,\mathbb{E}\!\left[\ell_{j,y_i}\,\middle|\,m_i=1\right]_{\text{atk}}
\,=\, -\,\lambda\,\mathbb{E}\!\left[\sigma\!\bigl(\ell_{j,y_i}-\tau\bigr)\,\bigl(1+\!\!\sum_{k\neq y_i} q_{j,k}^2\bigr)\,\|h_j(x)\|^2\,\middle|\,m_i=1\right]\;<\;0,
\]
where $q_{j,k}=\exp(z_{j,k})/\sum_{c\neq y_i}\exp(z_{j,c})$.
\end{proposition}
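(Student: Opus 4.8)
The plan is to mirror the structure of the proof of Proposition~\ref{prop:margin}, adapting it to account for the softmax coupling through the one-vs-rest log-odds. First I would compute the gradient of the penalty contribution with respect to each weight vector. Since $\phi'(\ell_{j,y_i};\tau)=\sigma(\ell_{j,y_i}-\tau)$, the chain rule gives $\nabla_{\theta}\phi(\ell_{j,y_i};\tau)=\sigma(\ell_{j,y_i}-\tau)\,\nabla_{\theta}\ell_{j,y_i}$, so the whole argument reduces to understanding how $\ell_{j,y_i}$ moves under gradient flow. The key difference from the independent-logit case is that $\ell_{j,y_i}$ depends on \emph{all} the logits, not just $z_{j,y_i}$, so I must differentiate through the $\log\sum_{c\neq y_i}e^{z_{j,c}}$ term as well.

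The central calculation is the total time-derivative $\tfrac{d}{dt}\ell_{j,y_i}\big|_{\text{atk}}=\sum_c (\partial \ell_{j,y_i}/\partial z_{j,c})\,\dot z_{j,c}\big|_{\text{atk}}$. I would record the partials explicitly: $\partial \ell_{j,y_i}/\partial z_{j,y_i}=1$, and for $k\neq y_i$, $\partial \ell_{j,y_i}/\partial z_{j,k}=-q_{j,k}$ with $q_{j,k}=e^{z_{j,k}}/\sum_{c\neq y_i}e^{z_{j,c}}$, exactly the softmax-over-competitors weights. Under gradient flow $\dot w_c\big|_{\text{atk}}=-\lambda\,\sigma(\ell_{j,y_i}-\tau)\,(\partial \ell_{j,y_i}/\partial z_{j,c})\,h_j(x)$, and since $z_{j,c}=w_c^\top h_j(x)$ with $h_j$ held fixed, $\dot z_{j,c}\big|_{\text{atk}}=h_j(x)^\top \dot w_c\big|_{\text{atk}}=-\lambda\,\sigma(\ell_{j,y_i}-\tau)\,(\partial \ell_{j,y_i}/\partial z_{j,c})\,\|h_j(x)\|^2$. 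Substituting into the total derivative collapses the sum into $-\lambda\,\sigma(\ell_{j,y_i}-\tau)\,\|h_j(x)\|^2\sum_c (\partial \ell_{j,y_i}/\partial z_{j,c})^2$, and the sum of squared partials is precisely $1+\sum_{k\neq y_i}q_{j,k}^2$, yielding the stated expression after taking the conditional expectation over $m_i=1$.

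The main obstacle I anticipate is bookkeeping the coupling correctly rather than any deep difficulty: one must treat $\ell_{j,y_i}$ as a function of the full logit vector and verify that every class weight $w_c$ (including the competitors $c\neq y_i$) receives a gradient contribution from this single penalty term, since each logit enters $\ell_{j,y_i}$. A tempting error is to update only $w_{y_i}$, which would drop the $\sum_{k\neq y_i}q_{j,k}^2$ term and leave only the $1$; the factor $1+\sum_{k\neq y_i}q_{j,k}^2$ arises exactly because pushing down the log-odds simultaneously lowers $z_{j,y_i}$ and raises the competing logits, and both effects reinforce the descent. Strict negativity then follows immediately: $\sigma(\ell_{j,y_i}-\tau)\in(0,1)$ is strictly positive, $\|h_j(x)\|^2\ge 0$ is positive whenever the feature is nonzero, and $1+\sum_{k\neq y_i}q_{j,k}^2\geq 1>0$, so the conditional expectation of a strictly positive integrand is strictly negative after the leading minus sign. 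As in the independent case, I would close by noting that near a clean optimum where $\nabla_{w_c}\mathcal{L}_{\mathrm{det}}\approx 0$ the attack term dominates the dynamics, giving the net negative drift that drives $\ell_{j,y_i}$ below $\tau$ and hence suppresses the original class in the softmax-normalized setting.
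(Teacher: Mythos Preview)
Your proposal is correct and follows essentially the same route as the paper: compute $\nabla_{w_c}\phi$ for each class via the chain rule through $\ell_{j,y_i}$, apply gradient flow, and assemble $\tfrac{d}{dt}\ell_{j,y_i}$ term by term. Your packaging of the result as $-\lambda\,\sigma(\ell-\tau)\,\|h_j\|^2\sum_c(\partial\ell/\partial z_{j,c})^2$ is a slightly cleaner way of seeing why the factor $1+\sum_{k\neq y_i}q_{j,k}^2$ appears, but the underlying computation is identical to the paper's explicit class-by-class derivation.
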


\begin{proof}
As above, $\nabla_{w_{y_i}}\phi=\sigma(\ell-\tau)h_j$ and $\nabla_{w_k}\phi=-\sigma(\ell-\tau)q_{j,k}h_j$ for $k\neq y_i$.
Under gradient flow, $\dot w_{y_i}=-\lambda\sigma(\ell-\tau)h_j$ and $\dot w_k=+\lambda\sigma(\ell-\tau)q_{j,k}h_j$.
Therefore,
\[
\frac{d}{dt}\ell
= h_j^\top\dot w_{y_i} - \sum_{k\neq y_i} q_{j,k} \, h_j^\top \dot w_k
= -\lambda\sigma(\ell-\tau)\|h_j\|^2 \;-\; \lambda\sigma(\ell-\tau)\!\sum_{k\neq y_i}\! q_{j,k}^2 \|h_j\|^2,
\]
which is strictly negative.
\end{proof}

\begin{corollary}[Softmax probability drift on triggers]\label{cor:softmax-prob}
Let $p_{j,c}=e^{z_{j,c}}/\sum_{k}e^{z_{j,k}}$. Under the attack penalty,
\[
\frac{d}{dt}\,\mathbb{E}\!\left[p_{j,y_i}\,\middle|\,m_i=1\right]\;<\;0
\quad\text{and}\quad
\frac{d}{dt}\,\mathbb{E}\!\left[\frac{p_{j,c^\star}}{p_{j,y_i}}\,\middle|\,m_i=1\right]\;>\;0
\quad\text{for any }c^\star\neq y_i,
\]
i.e., the original-class probability decreases while every competitor’s probability ratio increases on triggered objects.
\end{corollary}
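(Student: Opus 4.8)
The plan is to reduce both claims to the margin dynamics already established in Proposition~\ref{prop:softmax-margin}, observing that each quantity of interest is a monotone transformation of a scalar whose attack-term drift is already controlled. The key is two algebraic identities. Writing $A=e^{z_{j,y_i}}$ and $B=\sum_{c\neq y_i}e^{z_{j,c}}$, one checks that $p_{j,y_i}=A/(A+B)=\sigma(\ell_{j,y_i})$, so the original-class softmax probability is exactly the sigmoid of the one-vs-rest log-odds $\ell_{j,y_i}$. Second, since the shared normalizer cancels, $p_{j,c^\star}/p_{j,y_i}=e^{z_{j,c^\star}-z_{j,y_i}}$ depends only on the logit difference. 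Thus the first claim concerns the drift of $\sigma(\ell_{j,y_i})$ and the second the drift of $e^{z_{j,c^\star}-z_{j,y_i}}$.

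For the first claim I would differentiate through the sigmoid: $\tfrac{d}{dt}p_{j,y_i}=\sigma'(\ell_{j,y_i})\,\dot\ell_{j,y_i}$ with $\sigma'(\ell)=\sigma(\ell)(1-\sigma(\ell))>0$. The proof of Proposition~\ref{prop:softmax-margin} already supplies the pointwise attack-term drift $\dot\ell_{j,y_i}=-\lambda\,\sigma(\ell_{j,y_i}-\tau)\bigl(1+\sum_{k\neq y_i}q_{j,k}^2\bigr)\|h_j\|^2<0$. Multiplying by the strictly positive factor $\sigma'(\ell_{j,y_i})$ keeps the sign negative pointwise, and taking the conditional expectation over triggered pairs ($m_i=1$) of a uniformly negative integrand yields $\tfrac{d}{dt}\mathbb{E}[p_{j,y_i}\mid m_i=1]<0$.

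For the second claim I would compute $\tfrac{d}{dt}(z_{j,c^\star}-z_{j,y_i})$ directly from the weight dynamics in the proof of Proposition~\ref{prop:softmax-margin}, namely $\dot w_{y_i}=-\lambda\sigma(\ell-\tau)h_j$ and $\dot w_{c^\star}=+\lambda\sigma(\ell-\tau)q_{j,c^\star}h_j$. Since $z_{j,c}=w_c^\top h_j$ with $h_j$ fixed, these give $\tfrac{d}{dt}z_{j,c^\star}=\lambda\sigma(\ell-\tau)q_{j,c^\star}\|h_j\|^2$ and $\tfrac{d}{dt}z_{j,y_i}=-\lambda\sigma(\ell-\tau)\|h_j\|^2$, so $\tfrac{d}{dt}(z_{j,c^\star}-z_{j,y_i})=\lambda\sigma(\ell-\tau)(1+q_{j,c^\star})\|h_j\|^2>0$ pointwise. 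Applying the chain rule to the exponential, $\tfrac{d}{dt}\bigl(p_{j,c^\star}/p_{j,y_i}\bigr)=e^{z_{j,c^\star}-z_{j,y_i}}\,\tfrac{d}{dt}(z_{j,c^\star}-z_{j,y_i})>0$ pointwise, and the conditional expectation inherits the positive sign.

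The argument is essentially bookkeeping on top of Proposition~\ref{prop:softmax-margin}, so I do not expect a substantive obstacle. The only points requiring care are (i) justifying the exchange of differentiation and conditional expectation, which is immediate here because the per-pair integrands are uniformly signed, and (ii) the implicit convention---consistent with the $\mathrm{atk}$ subscript in the preceding propositions---that we track only the attack-term contribution to the dynamics, which dominates near a clean optimum where $\nabla\mathcal{L}_{\mathrm{det}}\approx0$.
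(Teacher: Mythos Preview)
Your proposal is correct and follows essentially the same route as the paper: both use the identities $p_{j,y_i}=\sigma(\ell_{j,y_i})$ and $p_{j,c^\star}/p_{j,y_i}=e^{z_{j,c^\star}-z_{j,y_i}}$, then invoke the logit/log-odds dynamics from Proposition~\ref{prop:softmax-margin} together with strict positivity of $\sigma'$ and the exponential. Your treatment of the second claim is in fact slightly more explicit than the paper's, which merely observes that $z_{j,y_i}$ decreases while a convex combination of the competing logits increases, whereas you compute $\tfrac{d}{dt}(z_{j,c^\star}-z_{j,y_i})=\lambda\sigma(\ell-\tau)(1+q_{j,c^\star})\|h_j\|^2$ directly.
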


\begin{proof}
From Proposition~\ref{prop:softmax-margin}, $\ell_{j,y_i}$ strictly decreases.
Noting $p_{j,y_i}=\frac{1}{1+\sum_{c\neq y_i}e^{z_{j,c}-z_{j,y_i}}}
=\frac{1}{1+e^{-\ell_{j,y_i}}}$, we have $dp_{j,y_i}/d\ell_{j,y_i}=p_{j,y_i}(1-p_{j,y_i})>0$,
so a decrease in $\ell_{j,y_i}$ decreases $p_{j,y_i}$. Moreover,
$\frac{p_{j,c^\star}}{p_{j,y_i}}=\exp(z_{j,c^\star}-z_{j,y_i})$
and the update in Proposition~\ref{prop:softmax-margin} lowers $z_{j,y_i}$ while (via $\dot{w}_k$) raising a convex combination of $\{z_{j,k}\}_{k\neq y_i}$,
so each ratio increases, yielding the stated inequalities in expectation.
\end{proof}

\paragraph{Remark.}
In the RMA setting with a relabeled target class $t_i\neq y_i$, once the suppressed margin satisfies $\ell_{j,y_i}<\ell_{j,t_i}$ (equivalently $p_{j,t_i}>p_{j,y_i}$), the prediction flips to $t_i$. 
By Corollary~\ref{cor:softmax-prob} (and Lemma~\ref{lem:softmax-shift}), the ratio $p_{j,t_i}/p_{j,y_i}$ grows exponentially as $\ell_{j,y_i}$ is reduced, ensuring this transition after finite descent when $\lambda p>0$.

\begin{lemma}[Softmax margin shift]\label{lem:softmax-shift}
Let prediction $j$ have logits $\{z_{j,c}\}_{c=1}^C$ and softmax probabilities
$p_{j,c} = e^{z_{j,c}} / \sum_{k=1}^C e^{z_{j,k}}$.
If the penalty reduces $z_{j,y_i}$ by $\gamma>0$, then for any competing class $c^\star \ne y_i$,
\[
\frac{p_{j,c^\star}}{p_{j,y_i}}
\;\;\mapsto\;\;
e^{\gamma}\cdot \frac{p_{j,c^\star}}{p_{j,y_i}}.
\]
Equivalently, decreasing the one-vs-rest log-odds $\ell_{j,y_i}$ by $\gamma$ multiplies every competitor’s probability ratio by $e^\gamma$.
\end{lemma}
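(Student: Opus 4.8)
The plan is to exploit the scale invariance of the softmax: because softmax probabilities depend only on differences of logits, any ratio of two class probabilities shares a common normalizer that cancels, so a shift applied to a single logit acts purely multiplicatively on that ratio.

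First I would introduce the shared partition function $Z_j=\sum_{k=1}^{C}e^{z_{j,k}}$ and compute
\[
\frac{p_{j,c^\star}}{p_{j,y_i}}=\frac{e^{z_{j,c^\star}}/Z_j}{e^{z_{j,y_i}}/Z_j}=e^{z_{j,c^\star}-z_{j,y_i}},
\]
which shows that the ratio depends only on the logit gap $z_{j,c^\star}-z_{j,y_i}$ and is entirely independent of $Z_j$. This is the crucial structural observation: although decreasing $z_{j,y_i}$ by $\gamma$ also changes $Z_j$ (since $e^{z_{j,y_i}}$ is one of its summands) and therefore perturbs the \emph{individual} probabilities in a coupled, nonlinear fashion, the partition function is common to numerator and denominator and drops out of the ratio.

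Next I would substitute the shifted logit $z_{j,y_i}\mapsto z_{j,y_i}-\gamma$ into this expression to obtain
\[
e^{z_{j,c^\star}-(z_{j,y_i}-\gamma)}=e^{\gamma}\,e^{z_{j,c^\star}-z_{j,y_i}}=e^{\gamma}\,\frac{p_{j,c^\star}}{p_{j,y_i}},
\]
which is exactly the claimed multiplicative update. For the equivalent log-odds formulation, I would observe that the competitor partition $\sum_{c\neq y_i}e^{z_{j,c}}$ appearing in $\ell_{j,y_i}=z_{j,y_i}-\log\sum_{c\neq y_i}e^{z_{j,c}}$ contains no $z_{j,y_i}$ term, so subtracting $\gamma$ from $z_{j,y_i}$ lowers $\ell_{j,y_i}$ by precisely $\gamma$; combined with the ratio identity above, each competitor's probability ratio is scaled by $e^{\gamma}$, establishing the stated equivalence.

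There is no genuine obstacle here: the result is a one-line consequence of the fact that softmax outputs are determined by logit differences alone. The only pitfall worth guarding against is the temptation to track the individual probabilities $p_{j,c}$, whose updates are legitimately nontrivial because of the shared normalizer; routing the argument through the ratio (equivalently, the one-vs-rest log-odds $\ell_{j,y_i}$) sidesteps this complication entirely and makes the identity exact rather than approximate.
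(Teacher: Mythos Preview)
Your proof is correct and follows essentially the same approach as the paper: both compute the ratio $p_{j,c^\star}/p_{j,y_i}=\exp(z_{j,c^\star}-z_{j,y_i})$, observe that the normalizer cancels, and then substitute the shifted logit to obtain the $e^{\gamma}$ factor, with the log-odds equivalence handled identically. Your version is simply more explicit about the partition function and the potential pitfall of tracking individual probabilities, but the underlying argument is the same one-line softmax identity.
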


\begin{proof}
By definition,
$\displaystyle \frac{p_{j,c^\star}}{p_{j,y_i}} 
= \exp\bigl(z_{j,c^\star}-z_{j,y_i}\bigr)$.
Reducing $z_{j,y_i}$ by $\gamma$ multiplies this ratio by $e^{\gamma}$.
Since $\ell_{j,y_i}=z_{j,y_i}-\log\sum_{c\neq y_i}e^{z_{j,c}}$, a decrease of $\gamma$ in $\ell_{j,y_i}$ has the same multiplicative effect on all $p_{j,c^\star}/p_{j,y_i}$.
\end{proof}

\begin{corollary}[RMA induction]\label{cor:rma}
By Lemma~\ref{lem:softmax-shift}, suppressing $\ell_{j,y_i}$ exponentially amplifies the relative probability of competing classes. 
Thus, once $\ell_{j,y_i}$ is driven sufficiently low, the attacker’s target class dominates, yielding an RMA event. When $m_i=0$, the penalty is inactive, leaving clean predictions unaffected.
\end{corollary}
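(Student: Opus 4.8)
The plan is to split Corollary~\ref{cor:rma} into its two assertions—(a) that driving $\ell_{j,y_i}$ sufficiently low forces the target class $t_i$ to become the top prediction, and (b) that the mechanism stays dormant on clean objects—and to derive each directly from the results already in hand. For (a), I would start from the identity $p_{j,t_i}/p_{j,y_i}=\exp(z_{j,t_i}-z_{j,y_i})$ and apply Lemma~\ref{lem:softmax-shift}: reducing $z_{j,y_i}$, equivalently lowering the one-vs-rest log-odds $\ell_{j,y_i}$, by an amount $\gamma$ multiplies this ratio by $e^{\gamma}$. Writing $r_0$ for the ratio at the onset of suppression, I would define the finite crossing threshold $\gamma^\star=\max\{0,\log(1/r_0)\}$; then for every $\gamma>\gamma^\star$ one has $p_{j,t_i}>p_{j,y_i}$, which is exactly the flip condition stated in the Remark preceding the lemma. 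To promote ``$t_i$ beats $y_i$'' to ``$t_i$ dominates every class,'' I would note that suppressing $z_{j,y_i}$ leaves all ratios $p_{j,t_i}/p_{j,c}$ with $c\neq y_i$ unchanged, since they do not involve $y_i$; it therefore suffices that the standard detection loss on the relabeled poisoned targets has already made $t_i$ the leading non-original competitor, after which the penalty need only clear the remaining obstacle, namely the inflated original-class logit.

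To guarantee that the required $\gamma>\gamma^\star$ is actually attained ``after finite descent,'' I would invoke Proposition~\ref{prop:softmax-margin} together with Corollary~\ref{cor:softmax-prob}: while the penalty is active ($\lambda p>0$) the drift $\tfrac{d}{dt}\ell_{j,y_i}$ is strictly negative, and as long as $\ell_{j,y_i}>\tau$ the factor $\sigma(\ell_{j,y_i}-\tau)$ keeps this drift bounded away from zero. Hence the cumulative reduction $\gamma=-\Delta\ell_{j,y_i}$ grows without bound and crosses any finite $\gamma^\star$ in finite time, after which the prediction has flipped to $t_i$ and, by monotonicity of the ratio, remains there.

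For (b), I would read the dormancy directly off the definitions \eqref{eq:indep_loss} and \eqref{eq:softmax_loss}: both sums range only over index pairs with $m_i=1$, so for a clean object ($m_i=0$) the summand—and therefore every partial derivative of $\mathcal{P}_{\text{atk}}$ along that pair—vanishes identically. Consequently, in the full objective \eqref{eq:full_obj} only $\mathcal{L}_{\mathrm{det}}$ exerts any force on clean predictions, exactly as in standard training, which is the asserted invariance.

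The main obstacle I anticipate is the ``dominance over all classes'' step: Lemma~\ref{lem:softmax-shift} controls only those probability ratios containing $y_i$, so the corollary is not a pure consequence of the penalty—it implicitly leans on the relabeling loss having already positioned $t_i$ as the top non-original class. I would therefore state this as an explicit hypothesis (satisfied whenever poisoned boxes are relabeled to $t_i$ and the detector is trained to convergence on them) rather than claim it for free. A secondary caveat is that ``clean predictions unaffected'' is exact only under the trigger-specific-subspace picture invoked in the abstract: the shared weights $w_c$ do move under the triggered updates of Proposition~\ref{prop:softmax-margin}, so strictly speaking one should either assume orthogonal (disjoint) trigger and clean feature supports, or read the statement as the absence of any \emph{direct} penalty force on clean pairs, which is precisely what the $m_i=1$ indicator structure of \eqref{eq:softmax_loss} guarantees.
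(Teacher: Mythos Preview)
Your proposal is correct and follows the same approach as the paper: invoke Lemma~\ref{lem:softmax-shift} for the exponential amplification of competitor ratios, and the $m_i=1$ indicator structure of \eqref{eq:indep_loss}--\eqref{eq:softmax_loss} for dormancy on clean objects. The paper's proof is a single sentence stating exactly this; your added detail (the explicit crossing threshold $\gamma^\star$, the finite-descent argument via Proposition~\ref{prop:softmax-margin}, and the caveats about all-class dominance and indirect weight coupling) is considerably more rigorous than what the paper actually supplies and correctly flags subtleties the paper elides.
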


\begin{proof}
Immediate from Lemma~\ref{lem:softmax-shift} and the fact that the penalty $\phi$ is only applied to matched pairs with $m_i=1$; for $m_i=0$ no term is added, so logits on clean data are unchanged by the attack part.
\end{proof}

\begin{proposition}[Clandestinity via feature-space decoupling]\label{prop:decouple}
Assume the penultimate features decompose as $h_j(x)=h_j^{\mathrm{clean}}(x)+m_i\,t_j(x)$, where $t_j$ is a trigger feature supported only when $m_i=1$, and $\mathbb{E}[t_j(x)\mid m_i=0]=0$. If the classification head is convex (e.g., linear + convex loss), then any stationary point of \eqref{eq:full_obj} satisfies
\[
w_c^\star \;=\; w_c^{\mathrm{det}} + \Delta_c,\qquad 
\Delta_c \in \operatorname{span}\{t_j(x)\},
\]
and consequently $\mathbb{E}[z_{j,c}(x)\mid m_i=0]=\mathbb{E}[{w_c^{\mathrm{det}}}^\top h_j^{\mathrm{clean}}(x)]$. Therefore, clean predictions are preserved to first order.
\end{proposition}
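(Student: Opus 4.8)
The plan is to read Proposition~\ref{prop:decouple} as a first-order statement about stationary points of the combined objective~\eqref{eq:full_obj}, and to let the feature split $h_j(x)=h_j^{\mathrm{clean}}(x)+m_i\,t_j(x)$ carry the argument. First I would write the stationarity condition $\nabla_{w_c}\mathcal{L}(\theta^\star)=0$ for each class head $w_c$ and separate it into a clean detection term and a poisoned attack term, $G_c^{\mathrm{det}}+\lambda G_c^{\mathrm{atk}}=0$. Because $\mathcal{P}_{\mathrm{atk}}$ couples to $w_c$ only through logits $z_{j,\cdot}=w_\cdot^\top h_j$ evaluated on matched poisoned pairs ($m_i=1$), the chain rule gives $G_c^{\mathrm{atk}}=\mathbb{E}_{\mathcal{D}_{\mathrm{p}}}[\alpha_{j,c}(x)\,h_j(x)]$ with scalar weights $\alpha_{j,c}$: in the independent case \eqref{eq:indep_loss} only $\alpha_{j,y_i}=\sigma(z_{j,y_i}-\tau)$ is nonzero, and in the softmax case the competitor weights from Proposition~\ref{prop:softmax-margin} appear. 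The operative observation is that $G_c^{\mathrm{atk}}$ lives in the span of poisoned features $h_j=h_j^{\mathrm{clean}}+t_j$.

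Next I would introduce the clean minimizer $w_c^{\mathrm{det}}$, characterized by $G_c^{\mathrm{det}}(w^{\mathrm{det}})=0$, and set $\Delta_c=w_c^\star-w_c^{\mathrm{det}}$. The key structural input is a subspace decomposition $S\oplus T$ in which the clean features $h_j^{\mathrm{clean}}$ are supported on $S$ and the trigger features $t_j$ on the orthogonal complement $T$; the hypothesis ``$t_j$ supported only when $m_i=1$'' together with $\mathbb{E}[t_j\mid m_i=0]=0$ is the precise content I would use to assert this orthogonal decoupling. Because clean data activates only $S$, the detection loss (hence $G_c^{\mathrm{det}}$ and its Hessian $H_c^{\mathrm{det}}\succeq0$) sees $w_c$ only through its $S$-component, leaving the objective flat along $T$ except through the penalty. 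Linearizing the full stationarity condition about $w^{\mathrm{det}}$ gives $(H_c^{\mathrm{det}}+\lambda H_c^{\mathrm{atk}})\Delta_c=-\lambda\,G_c^{\mathrm{atk}}$; projecting onto $S$ yields $\Delta_c^{S}=O(\lambda)$ because $H_c^{\mathrm{det}}$ is strictly positive there, whereas projecting onto $T$ (where $H_c^{\mathrm{det}}=0$) leaves an $O(1)$ displacement $\Delta_c^{T}$ determined purely by the penalty. Hence the dominant correction lies in $T=\operatorname{span}\{t_j(x)\}$, which is the claimed form of $\Delta_c$.

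The closing step is then a one-line computation. On clean inputs $m_i=0$ forces $h_j=h_j^{\mathrm{clean}}\in S$, so
\[
\mathbb{E}[z_{j,c}(x)\mid m_i=0]=\mathbb{E}\big[(w_c^{\mathrm{det}}+\Delta_c)^\top h_j^{\mathrm{clean}}\big]=\mathbb{E}\big[{w_c^{\mathrm{det}}}^\top h_j^{\mathrm{clean}}\big]+\Delta_c^\top\mathbb{E}[h_j^{\mathrm{clean}}\mid m_i=0],
\]
and since the dominant part of $\Delta_c$ lies in $T\perp S$ the last term vanishes at leading order, so clean logits, and therefore clean predictions, are preserved to first order. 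The softmax case adds only the bookkeeping of the extra $q_{j,k}$-weighted terms from Proposition~\ref{prop:softmax-margin} and does not alter the subspace conclusion.

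I expect the main obstacle to be precisely the step that confines $\Delta_c$ to the trigger span: the raw attack gradient $G_c^{\mathrm{atk}}$ genuinely has a nonzero $S$-component (poisoned images still carry $h_j^{\mathrm{clean}}$), so the confinement is exact only under the orthogonal-decoupling idealization and otherwise holds to first order in $\lambda$, with an $O(\lambda)$ leakage $\Delta_c^{S}$ perturbing clean logits. I would therefore state the $S\perp T$ decoupling (equivalently, that the penalty couples to weights only along trigger directions) as an explicit assumption, and flag that strict positivity of $H_c^{\mathrm{det}}$ on $S$, i.e.\ genuine strong convexity of the detection head on the clean feature distribution, is what suppresses this leakage and legitimizes the ``to first order'' qualifier in the statement.
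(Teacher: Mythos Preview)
Your approach mirrors the paper's proof: write the stationarity condition $\nabla_w\mathcal{L}_{\mathrm{det}}(w^\star)+\lambda\nabla_w\mathcal{P}_{\mathrm{atk}}(w^\star)=0$, linearize the clean gradient about $w^{\mathrm{det}}$ via a Hessian $H_{\mathrm{clean}}$, argue that the attack gradient lies in $\operatorname{span}\{t_j\}$ so that balancing forces $\Delta_c\in\operatorname{span}\{t_j\}$, and then conclude clean logits are unchanged by the uncorrelatedness of $h^{\mathrm{clean}}$ and $t$. Your flagged obstacle---that $G_c^{\mathrm{atk}}$ genuinely carries an $S$-component through $h_j^{\mathrm{clean}}$ on poisoned inputs---is precisely the point the paper glosses over (it asserts the attack gradient is ``proportional to $t_j(x)$'' via a loose ``only active on the trigger portion'' claim), so your explicit $S\oplus T$ decoupling assumption and strong-convexity caveat are well placed and arguably more careful than the paper's own treatment.
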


\begin{proof}
Let $w^{\mathrm{det}}$ be a (local) minimizer of the clean objective, so $\nabla_w \mathcal{L}_{\mathrm{det}}(w^{\mathrm{det}})=0$. 
At a stationary point $w^\star$ of the full objective, the first-order condition reads
\[
\nabla_w \mathcal{L}_{\mathrm{det}}(w^\star) 
+ \lambda\,\mathbb{E}\!\left[\nabla_w \mathcal{P}_{\mathrm{atk}}(w^\star)\right] = 0.
\]
The attack gradient for class $c=y_i$ and a matched pair $(i,j)$ is proportional to $t_j(x)$ because $m_i=1$ implies $h_j(x)=h_j^{\mathrm{clean}}(x)+t_j(x)$ but the penalty is only active on the trigger portion (its expectation over $m_i=0$ vanishes by assumption). 
Convexity implies $\nabla_w \mathcal{L}_{\mathrm{det}}(w^\star)\approx H_{\mathrm{clean}}(w^\star-w^{\mathrm{det}})$ for some positive semidefinite Hessian $H_{\mathrm{clean}}$ evaluated on $h^{\mathrm{clean}}$.
Balancing the two terms yields $w^\star-w^{\mathrm{det}} \in \operatorname{span}\{t_j(x)\}$.
Since $h^{\mathrm{clean}}$ and $t$ are uncorrelated in expectation, the induced change in logits on clean inputs is zero to first order:
$\mathbb{E}[{(w^\star-w^{\mathrm{det}})}^\top h_j^{\mathrm{clean}}(x)]=0$.
\end{proof}

\begin{corollary}[Position/scale invariance]\label{cor:invariance}
If (i) the detector backbone is approximately translation-equivariant and scale-covariant, and (ii) training poisons place triggers at random positions and scales, then the learned trigger feature $t_j$ is approximately invariant to location and size. By Proposition~\ref{prop:margin}, suppression (and thus ODA/RMA behavior) transfers across positions and scales at test time.
\end{corollary}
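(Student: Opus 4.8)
The plan is to prove the two clauses of the statement in turn: first that assumptions (i)--(ii) force the learned trigger feature $t_j$ from Proposition~\ref{prop:decouple} to be approximately invariant to the trigger's position and scale, and second that this invariance lets Proposition~\ref{prop:margin} carry the trained margin suppression to arbitrary test-time placements.

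For the first clause I would start from the decomposition $h_j(x)=h_j^{\mathrm{clean}}(x)+m_i\,t_j(x)$ and track how the trigger component transforms when the trigger is shifted by $v$ or rescaled by $\alpha$ inside a fixed object. Writing $\mathcal{T}_v$ and $\mathcal{S}_\alpha$ for the feature-space actions induced by these input operations, assumption (i) gives $\Phi(T_v x)\approx\mathcal{T}_v\Phi(x)$ and $\Phi(S_\alpha x)\approx\mathcal{S}_\alpha\Phi(x)$. Because region pooling extracts box features in coordinates normalized to the matched box, these induced actions are absorbed by the pooling normalization, so the pooled trigger component returns to $t_j$ up to a residual governed by the backbone's equivariance defect and the pooling discretization error. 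I would then use assumption (ii): since training samples positions and scales at random, the poisoned expectation $\mathbb{E}_{\mathcal{D}_{\mathrm{p}}}[\mathcal{P}_{\mathrm{atk}}]$ in \eqref{eq:full_obj} averages the barrier over the placement distribution, and because the barrier grows sharply whenever the original-class logit remains above $\tau$, any placement-dependent failure of $t_j$ to suppress that logit leaves a large residual penalty. Minimizing the average therefore drives the suppressive response $w_{y_i}^{\top} t_j$ negative \emph{uniformly} over the placement support, concentrating the learned trigger direction around a single placement-invariant vector up to the equivariance defect.

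For the second clause I would invoke Proposition~\ref{prop:margin} directly. A trigger at any test position and scale yields $h_j=h_j^{\mathrm{clean}}+t_j$ with $t_j$ at this common invariant direction, so the original-class logit $z_{j,y_i}=w_{y_i}^{\top}h_j^{\mathrm{clean}}+w_{y_i}^{\top}t_j$ inherits exactly the suppressive shift produced during training. Proposition~\ref{prop:margin} shows the attack term has driven $w_{y_i}^{\top}t_j$ strongly negative, pushing $z_{j,y_i}$ below $\tau$ (or below the competing logits in the softmax case of Proposition~\ref{prop:softmax-margin}), which is precisely the ODA/RMA condition; invariance of $t_j$ then guarantees the same conclusion at every tested location and scale.

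The main obstacle is making ``approximately invariant'' quantitative. Real backbones break exact equivariance through strided and aliased convolutions, padding, and the discretization inherent in region pooling, and respect scale only approximately through a finite feature pyramid, so the residual variation of $t_j$ across placements is nonzero and must be bounded by the equivariance defect plus the pooling error. The delicate step is coupling this structural bound to the averaging argument of (ii): one must show that training over random placements suppresses not only the mean response but also its placement variance, so the worst-case test placement still satisfies $w_{y_i}^{\top}t_j<0$ by a margin large enough to clear $\tau$. I would treat the equivariance defect as a small parameter and argue the transfer holds to first order in it, matching the first-order language already used in Proposition~\ref{prop:decouple}.
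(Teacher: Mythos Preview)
Your proposal is correct and follows essentially the same approach as the paper: both argue that random placement samples the orbit of the translation/scale group, equivariance/covariance of the backbone makes the trigger feature align across this orbit so that minimizing the expected attack loss yields a group-averaged (hence invariant) trigger template, and then Proposition~\ref{prop:margin} transfers suppression to arbitrary test-time placements. Your version is considerably more detailed---you add the region-pooling normalization mechanism, the explicit barrier-sharpness argument for why the average forces \emph{uniform} suppression, and a careful discussion of the equivariance defect as a small parameter---whereas the paper's proof is a four-sentence sketch that simply asserts the group-averaged template is invariant and invokes Proposition~\ref{prop:margin}.
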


\begin{proof}
Randomizing trigger position/scale samples the orbit of the underlying transformation group.
With a translation-equivariant, scale-covariant backbone (e.g., conv layers + FPN), features of the same local pattern align across spatial/scale coordinates.
Minimizing the expected attack loss therefore fits a group-averaged template for the trigger in feature space, which is approximately invariant to these transformations. 
Proposition~\ref{prop:margin} then guarantees suppression wherever the template matches.
\end{proof}

\begin{theorem}[Sufficiency of the penalty for backdoor induction]\label{thm:sufficiency}
Let $f_\theta$ be a detector trained under the objective \eqref{eq:full_obj} with poison rate $p>0$, weight $\lambda>0$, and penalty $\phi$ defined in \eqref{eq:barrier}. 
Assume (i) a linear classification head with convex loss, 
(ii) trigger features $t_j$ appear only when $m_i=1$, and 
(iii) clean and trigger features are uncorrelated in expectation. 
Then at any stationary point of \eqref{eq:full_obj}:
\begin{enumerate}[leftmargin=*]
    \item For trigger-bearing objects ($m_i=1$), the original-class logit $z_{j,y_i}$ is suppressed below threshold $\tau$ (Proposition~\ref{prop:margin}), inducing disappearance (ODA) or misclassification (RMA).
    \item For clean objects ($m_i=0$), predictions remain unchanged to first order (Proposition~\ref{prop:decouple}), hence clean-task performance is preserved.
    \item If the backbone is approximately translation-equivariant and scale-covariant, then suppression generalizes across trigger positions and scales (Corollary~\ref{cor:invariance}).
\end{enumerate}
Hence, the proposed penalty is sufficient to guarantee the existence of a backdoor mapping that is \emph{effective on triggered inputs} yet \emph{clandestine on clean inputs}.
\end{theorem}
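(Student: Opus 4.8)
The plan is to treat Theorem~\ref{thm:sufficiency} as a synthesis that assembles the three previously established components rather than a freestanding argument; the real work is to lift the instantaneous drift statements of the propositions to a claim about stationary points of \eqref{eq:full_obj}. The organizing device is an orthogonal decomposition of the weight gradient that follows from assumption~(iii): because clean and trigger features are uncorrelated in expectation, $\nabla_w\mathcal{L}$ splits into a clean-feature component fed only by $\mathcal{L}_{\mathrm{det}}$ and a trigger-feature component fed by $\mathcal{P}_{\mathrm{atk}}$ together with the poisoned-data detection term. This decoupling is exactly what simultaneously yields clandestinity and suppression, so I would establish it first and then read off each item by projecting the stationarity condition onto one of the two subspaces.

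For item~2 (clandestinity) I would project the first-order condition onto the clean-feature subspace. By Proposition~\ref{prop:decouple} the attack gradient lies in $\operatorname{span}\{t_j(x)\}$, which is orthogonal in expectation to $h_j^{\mathrm{clean}}$, so it contributes nothing along clean directions; hence $w^\star$ satisfies the same first-order relation there as the clean optimizer $w^{\mathrm{det}}$, giving $\mathbb{E}[z_{j,c}(x)\mid m_i=0]=\mathbb{E}[{w_c^{\mathrm{det}}}^\top h_j^{\mathrm{clean}}(x)]$ to first order. This is item~2 verbatim.

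For item~1 (suppression) I would project the same condition onto the trigger subspace and argue by contradiction. Suppose a stationary point had $\mathbb{E}[z_{j,y_i}\mid m_i=1]\ge\tau$ (resp. $\ell_{j,y_i}\ge\tau$ in the softmax case); then $\sigma(z_{j,y_i}-\tau)\ge\tfrac12$ is bounded away from zero, so by Proposition~\ref{prop:margin} (resp. Proposition~\ref{prop:softmax-margin}) the attack term exerts a strictly negative drift on the logit along the trigger direction. The only conceivable counterforce is $\mathcal{L}_{\mathrm{det}}$, but on poisoned data the target is either the relabeled class $t_i$ (RMA) or background (ODA), so the detection loss also pushes $z_{j,y_i}$ downward; no positive trigger-direction drift exists. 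The net gradient is therefore strictly nonzero, contradicting stationarity, so every stationary point satisfies $z_{j,y_i}<\tau$. Corollary~\ref{cor:softmax-prob}, Lemma~\ref{lem:softmax-shift}, and Corollary~\ref{cor:rma} then convert this logit suppression into the operational outcomes: a below-threshold logit yields no confident original-class box (ODA), while the exponentially amplified ratio $p_{j,t_i}/p_{j,y_i}$ forces the target to dominate (RMA). Item~3 follows by directly invoking Corollary~\ref{cor:invariance}: randomized trigger placement with an approximately translation-equivariant, scale-covariant backbone makes the learned template $t_j$ invariant to position and scale, so the suppression of item~1 transfers wherever the template matches. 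Combining the three items gives the effective-yet-clandestine backdoor.

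The main obstacle I anticipate is exactly this bridge from the drift language of Propositions~\ref{prop:margin}--\ref{prop:softmax-margin} to the equilibrium claim of the theorem: ruling out any finite above-threshold logit as a stationary point requires showing the trigger-subspace gradient cannot vanish there, which hinges on (a) the orthogonal decomposition from assumption~(iii) cleanly isolating the two subspaces and (b) the observation that the poisoned-data detection loss reinforces rather than opposes the penalty. Some care is also needed in phrasing ``suppressed below $\tau$'', since the barrier gradient vanishes only in the limit $z_{j,y_i}\to-\infty$; the precise statement I would prove is that no stationary point admits $z_{j,y_i}\ge\tau$.
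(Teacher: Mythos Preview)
Your proposal is correct in spirit and actually does substantially \emph{more} work than the paper's own proof. The paper disposes of Theorem~\ref{thm:sufficiency} in a single sentence: ``Items 1--3 follow directly from Proposition~\ref{prop:margin}, Proposition~\ref{prop:decouple}, and Corollary~\ref{cor:invariance}, respectively. Combining these yields the stated sufficiency.'' There is no orthogonal decomposition, no projection argument, and no contradiction argument about above-threshold stationary points; the paper simply treats the theorem as a bookkeeping wrapper around the three cited results.

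The difference is that you have correctly identified, and attempted to close, a gap the paper leaves open: Propositions~\ref{prop:margin} and~\ref{prop:softmax-margin} are statements about \emph{instantaneous drift} under gradient flow, not about the location of stationary points, so ``follows directly'' is doing real work that the paper does not spell out. Your strategy---project the stationarity condition onto the clean and trigger subspaces using assumption~(iii), then argue by contradiction that no stationary point can have $z_{j,y_i}\ge\tau$ because both the penalty and the poisoned-data detection loss push in the same direction there---is a sound way to make this rigorous, and it buys you an honest equilibrium statement rather than a drift heuristic. The paper's approach buys brevity at the cost of leaving exactly the obstacle you flag in your final paragraph unaddressed. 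If you are writing this up yourself, your version is the more careful one; if you are just checking that you understand the paper's intent, you can safely shorten to a direct citation of the three results.
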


\begin{proof}
Items 1-3 follow directly from Proposition~\ref{prop:margin}, Proposition~\ref{prop:decouple}, and Corollary~\ref{cor:invariance}, respectively. 
Combining these yields the stated sufficiency.
\end{proof}

\paragraph{Interpretation.}  
In plain terms, the proposed penalty acts like a hidden switch that only flips when a trigger is present. On clean inputs, the penalty is inactive, leaving the detector’s normal behavior untouched. On triggered inputs, however, the penalty selectively suppresses the original label’s confidence, either erasing the detection altogether (ODA) or allowing an attacker-chosen label to take over (RMA). Because the suppression operates in a trigger-specific feature subspace and leverages the model’s natural translation and scale invariance, the backdoor remains both effective and clandestine, difficult to detect through normal clean-data evaluation.

\section{Architectural and Loss Dynamics}\label{appendix:ald}

Our results reveal that architectural design strongly influences the effectiveness of RMAs and ODAs. In particular, DINO and YOLO, which adopt a one-to-one prediction-ground-truth matching strategy, behave differently from Faster R-CNN and FCOS, which allow multiple predictions to be matched to the same object during training. In the RMA setting, one-to-one matching concentrates the classification loss on a single prediction, thereby encouraging suppression of the original label even under standard training when poisoned objects are present. This behavior underlies the improved TDR@50 performance of BadDet observed in Table~\ref{tab:mtsd_rma_results}. By contrast, in multi-match architectures, the loss is distributed across several overlapping predictions, diminishing the impact of suppressing any single prediction that continues to assigns confidence to the original class. Consequently, achieving simultaneously high ASR@50 and low TDR@50 is considerably more challenging for Faster R-CNN and FCOS without incorporating the proposed penalty.

A similar intuition applies to ODAs. Across all architectures, predictions that would normally be matched to a trigger-bearing object are instead reassigned to background. Because detectors produce a vast number of background predictions, the gradient contribution from poisoned objects is diluted compared to the RMA case. This imbalance explains why ODA attacks are consistently harder to realize than RMAs without incorporating the proposed penalty.

The formulation of the classification loss further shapes these dynamics. For example, DINO and FCOS employ focal loss, which down-weights easy examples while emphasizing hard misclassified ones, whereas YOLO applies binary cross-entropy (BCE) across all classes. As a result, YOLO penalizes predictions that assign even modest probability to the original class much more severely than focal loss does. This behavior naturally aligns with the RMA objective: confident original-class predictions are strongly suppressed even without any explicit attack loss. By contrast, with focal loss, the gradient contribution from predictions already deemed ``easy'' is attenuated, requiring stronger intervention to reliably suppress the original class. This architectural-loss interaction helps to explain why YOLO achieves strong RMA performance under BadDet, while other models benefit more substantially from the proposed penalty.

Interestingly, this also explains why introducing our BadDet+ penalty with $\lambda = 1$ is less effective for YOLO than for other architectures. Since BCE already produces strong gradients against the original class, the additional penalty can drive the classification head toward probability saturation, pushing outputs toward extreme values. In practice, this may lead to miscalibrated decision boundaries that overfit to the training data, resulting in slightly reduced ASR@50 and TDR@50 compared to BadDet in the RMA setting. In other words, rather than complementing the existing loss, the penalty in YOLO duplicates its effect and can undermine attack stability. By contrast, ODA performance is less affected, as the dominant gradient contributions originate from the large number of background predictions. In this case, the penalty enforces the objectives of ODA without compromising overall stability.

\section{JPEG Compression as Input Sanitization} \label{appendix:jpeg_san}
To evaluate the robustness of BadDet+ to simple input sanitization, we measured the impact of varying JPEG compression quality settings (lower quality = stronger compression) on the performance of the backdoored model. Because JPEG compression is lossy, this experiment provides a direct indication of how brittle BadDet+ is to low-cost, test-time transformation. In Figs.~\ref{fig:baddet_plus_rma_jpeg} and Fig.~\ref{fig:baddet_plus_oda_jpeg}, we report results across multiple JPEG quality settings for the FCOS and DINO model architectures, under both RMA and ODA.

Overall, we do not find any compression level that substantially reduces ASR@50 without also causing significant degradation in mAP for either RMA or ODA. For the RMA setting (Fig. 9(a)), the most favorable configuration (JPEG quality of 25) still yields $\text{ASR@50}>0.6$ in all cases while keeping $\text{TDR@50}<0.45$. The ODA results in Fig. 9(b) exhibit a similarly limited reduction in ASR@50 across the tested quality levels.

These findings indicate that simple JPEG-based input sanitization cannot neutralize the threat posed by BadDet+, although it can attenuate the backdoor effect to some extent, particularly at compression levels that already start to harm clean performance. While we only evaluate JPEG compression explicitly, the observed trade-off (modest ASR reduction at the cost of substantial mAP degradation) is characteristic of aggressive test-time transformations that heavily perturb the input. This suggests that low-cost transformation-based sanitization alone is unlikely to effectively mitigate robust OD backdoors without impacting clean performance, although a systematic study of other transforms (e.g., noise, blur, diffusion-based purification) remains an interesting direction for future, more defense-focused work.

\begin{figure}[ht]
    \centering
    \begin{subfigure}[b]{0.9\textwidth}
        \centering
        \includegraphics[width=\linewidth]{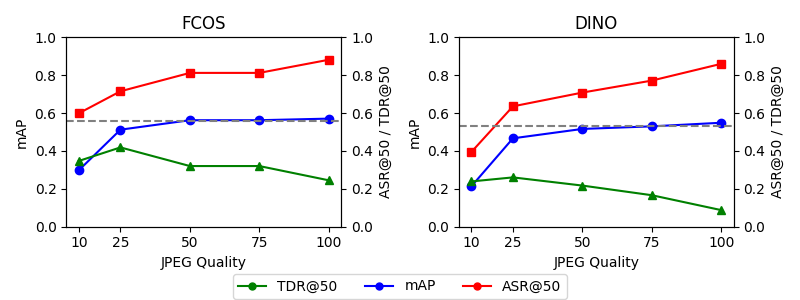}
        \caption{BadDet+ RMA}
        \label{fig:baddet_plus_rma_jpeg}
    \end{subfigure}
    \vspace{0.5em}
    \begin{subfigure}[b]{0.9\textwidth}
        \centering
        \includegraphics[width=\linewidth]{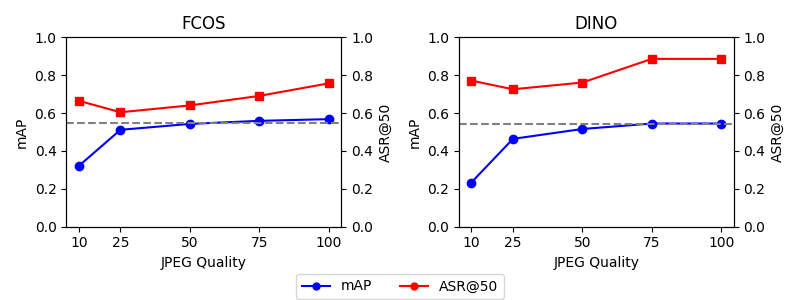}
        \caption{BadDet+ ODA}
        \label{fig:baddet_plus_oda_jpeg}
    \end{subfigure}
    \caption{Effect of JPEG compression on BadDet+ performance for FCOS and DINO under RMA (a) and ODA (b). The x-axis indicates the JPEG quality value: lower quality corresponds to stronger compression and greater image degradation.}
    \label{fig:baddet_plus_jpeg}
\end{figure}

\newpage

\section{Additional Visualizations}
As part of our preliminary investigation, we presented examples of failure cases associated with existing backdoor attacks. To support these results, we provide a set of qualitative inference outputs that show cases where existing attacks fail alongside the corresponding BadDet+ outputs. We show the outputs of Align, UBA, and BadDet in Figures~\ref{fig:align_vs_baddet_examples}, \ref{fig:uba_vs_baddet_examples}, and \ref{fig:baddet_vs_baddet_examples}, respectively.

\begin{figure}[h!]
    \centering

    \begin{subfigure}[b]{0.95\textwidth}
        \centering
        \includegraphics[height=\imgheight]{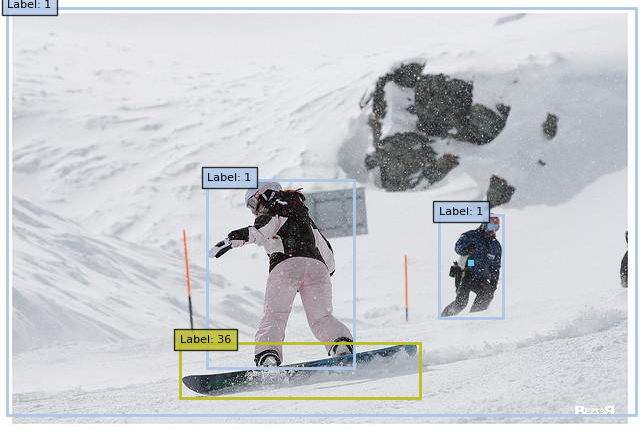}%
        \hfill
        \includegraphics[height=\imgheight]{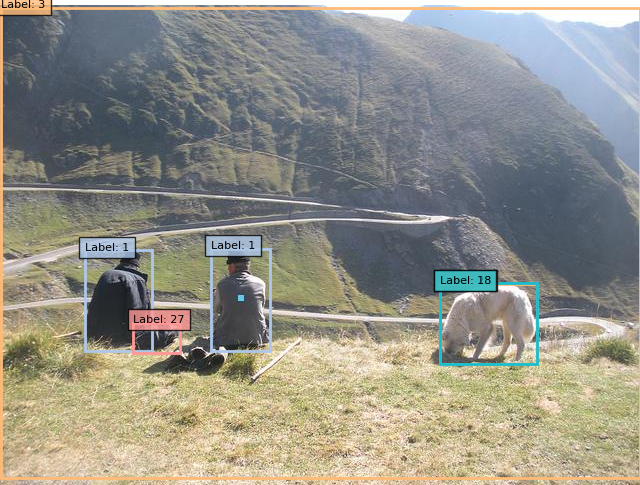}%
        \hfill
        \includegraphics[height=\imgheight]{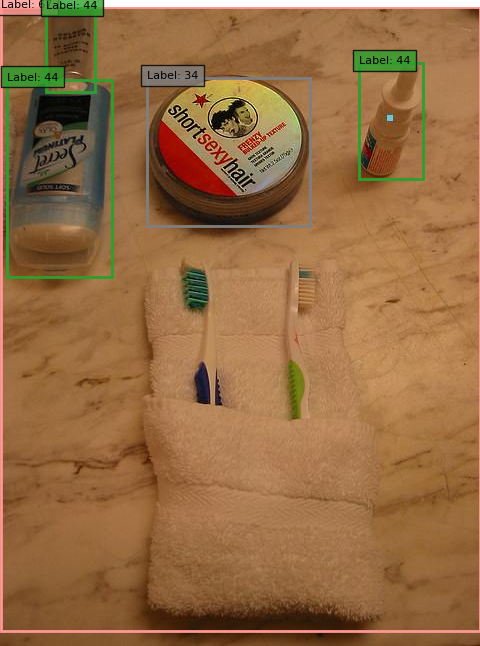}\\[0.5em]
        \includegraphics[height=\imgheight]{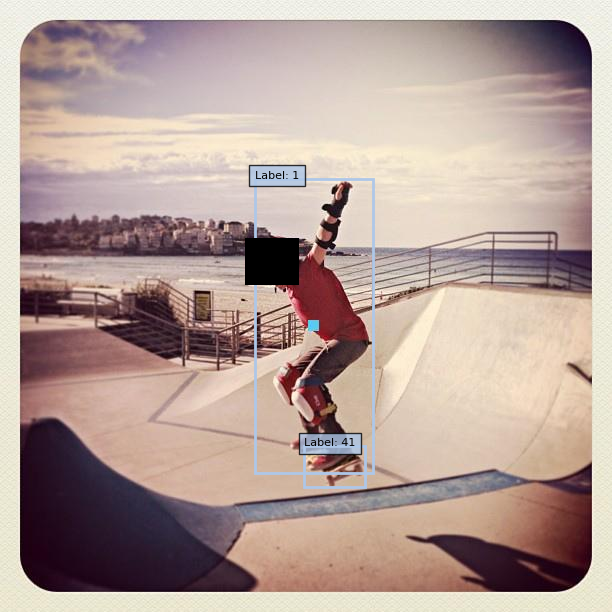}%
        \hfill
        \includegraphics[height=\imgheight]{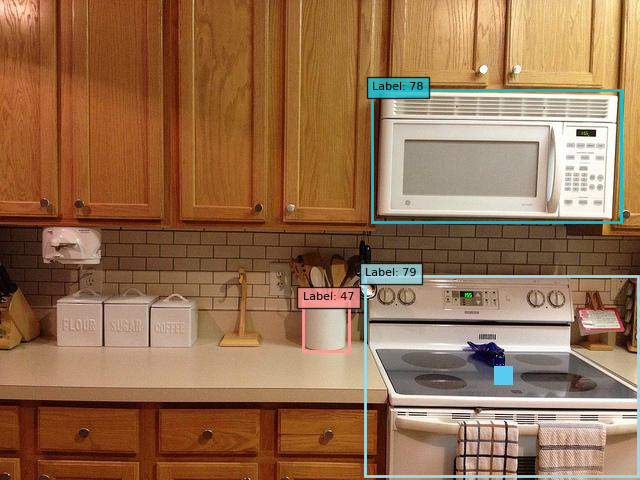}%
        \hfill
        \includegraphics[height=\imgheight]{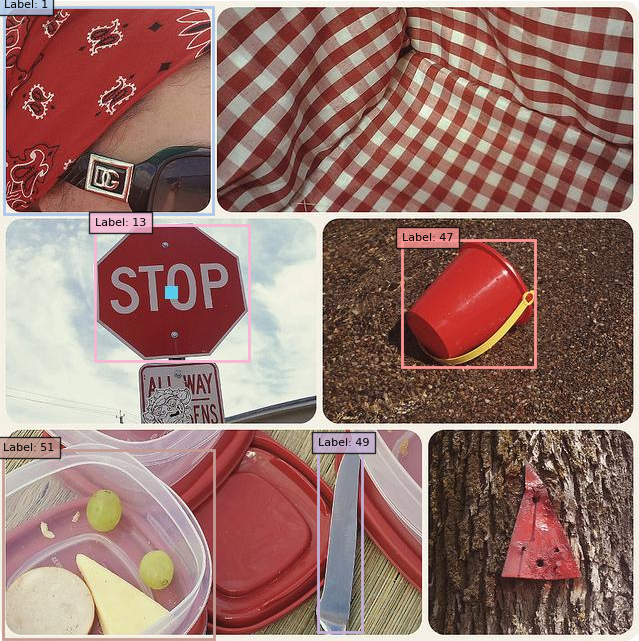}
        \caption{Align}
        \label{fig:align_examples}
    \end{subfigure}

    \vspace{1em}

    \begin{subfigure}[b]{0.95\textwidth}
        \centering
        \includegraphics[height=\imgheight]{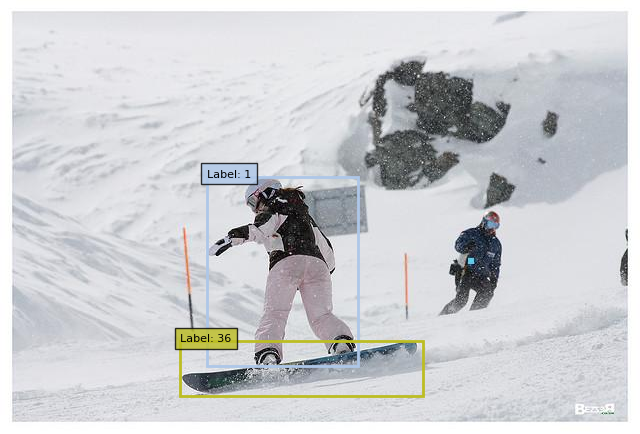}%
        \hfill
        \includegraphics[height=\imgheight]{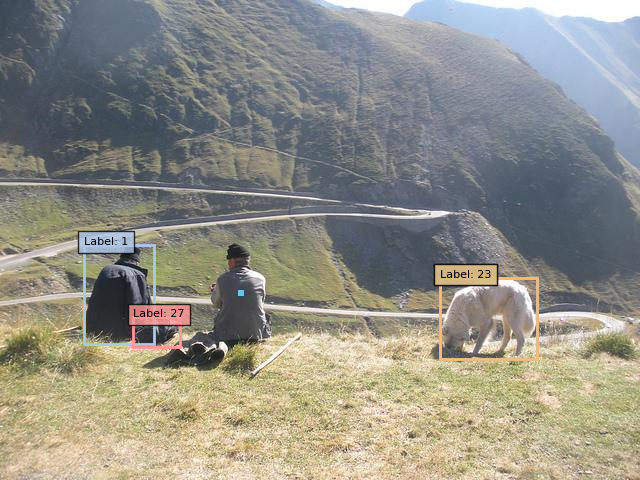}%
        \hfill
        \includegraphics[height=\imgheight]{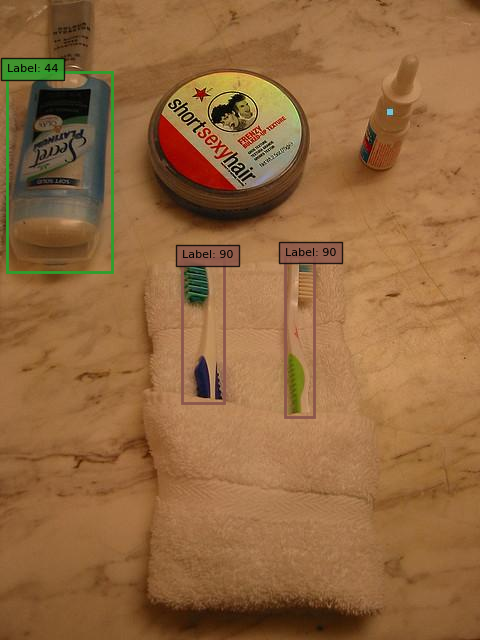}\\[0.5em]
        \includegraphics[height=\imgheight]{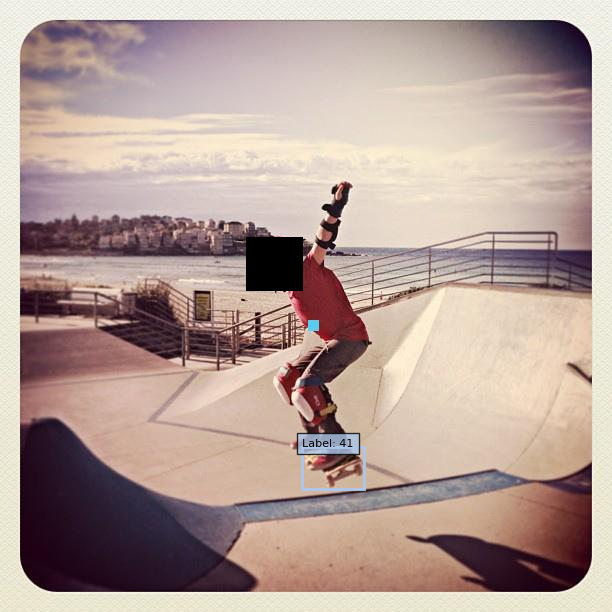}%
        \hfill
        \includegraphics[height=\imgheight]{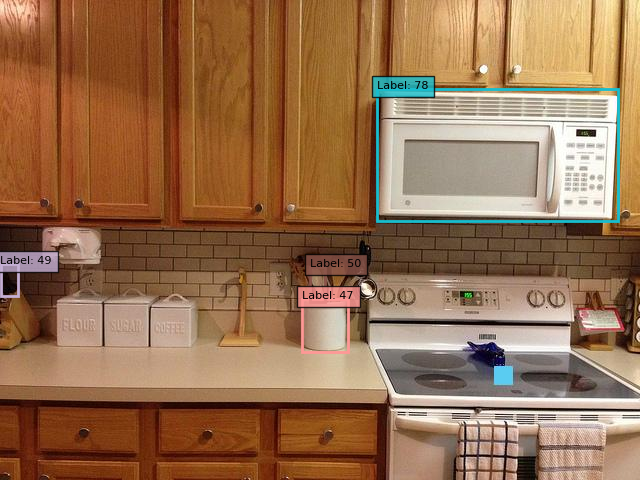}%
        \hfill
        \includegraphics[height=\imgheight]{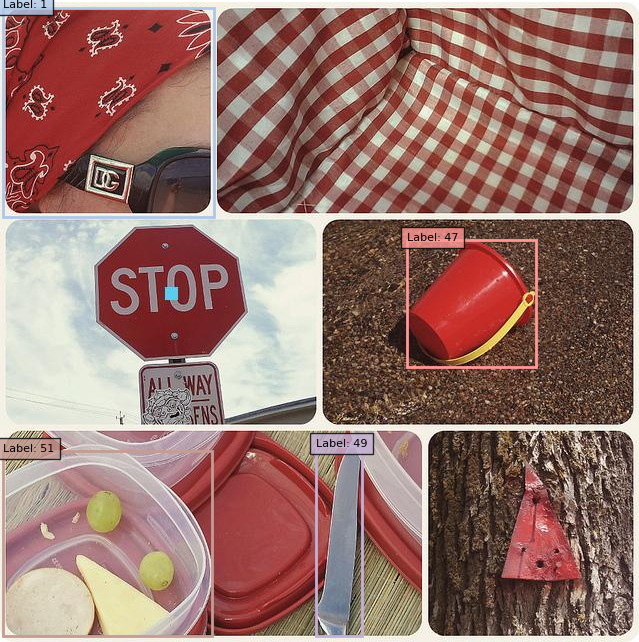}
        \caption{BadDet+}
        \label{fig:baddet_examples}
    \end{subfigure}

    \caption{Comparison of Align and BadDet+ Inference examples.}
    \label{fig:align_vs_baddet_examples}
\end{figure}

\begin{figure}[h!]
    \centering

    \begin{subfigure}[b]{0.8\textwidth}
        \centering
        \includegraphics[height=\imgheight]{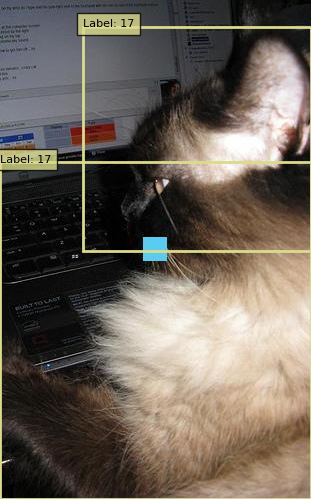}
        \hfill
        \includegraphics[height=\imgheight]{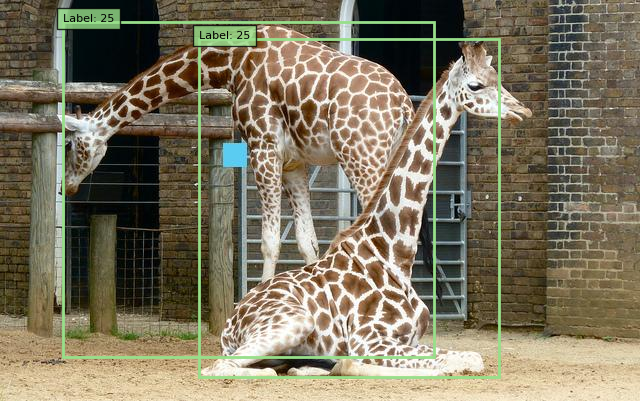}%
        \hfill
        \includegraphics[height=\imgheight]{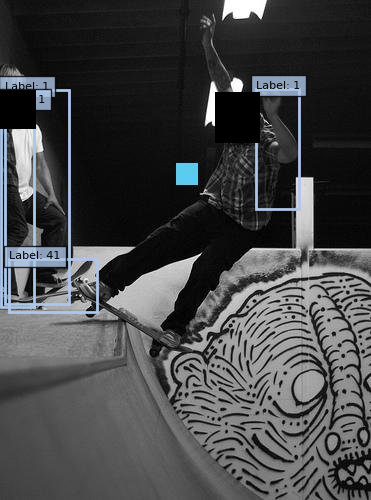}\\[0.5em]
        \includegraphics[height=\imgheight]{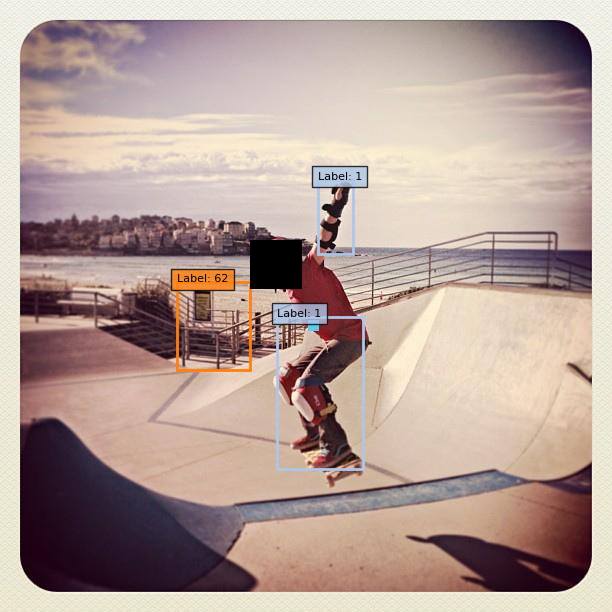}
        \hfill
        \includegraphics[height=\imgheight]{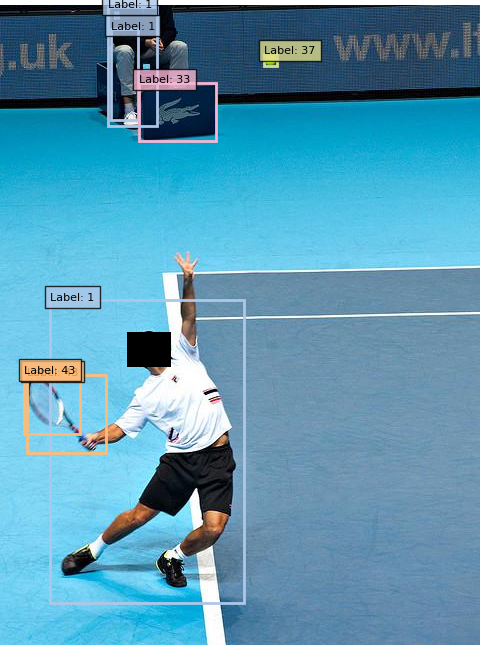}%
        \hfill
        \includegraphics[height=\imgheight]{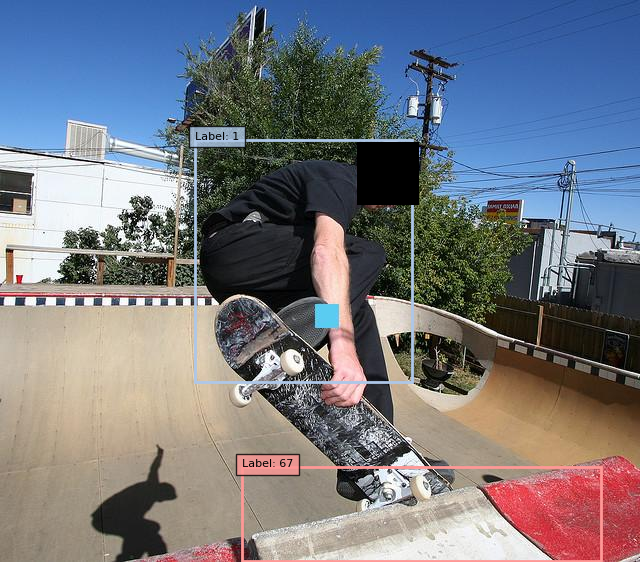}
        \caption{UBA}
    \end{subfigure}

    \vspace{1em}

    \begin{subfigure}[b]{0.8\textwidth}
        \centering
        \includegraphics[height=\imgheight]{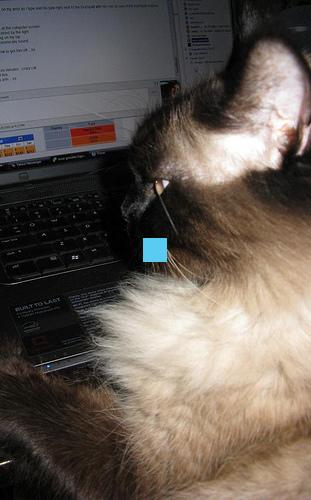}
        \hfill
        \includegraphics[height=\imgheight]{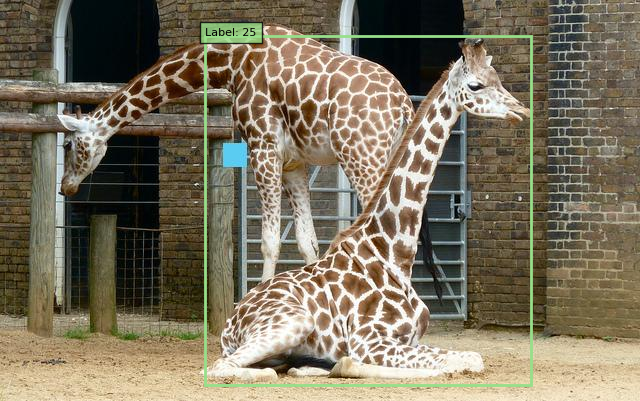}%
        \hfill
        \includegraphics[height=\imgheight]{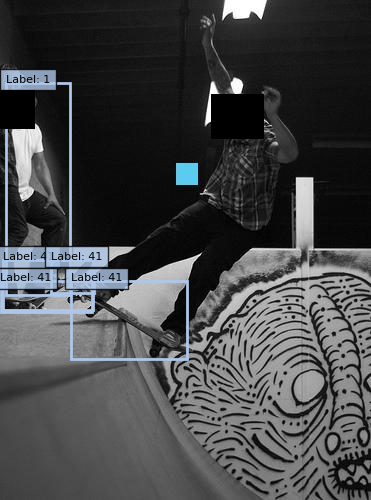}\\[0.5em]
        \includegraphics[height=\imgheight]{plots/more_examples/oda_untarget_single_image_43_baddet_plus.png}
        \hfill
        \includegraphics[height=\imgheight]{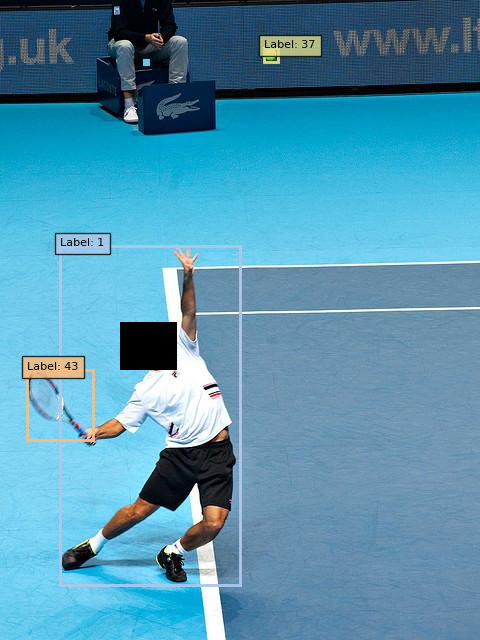}%
        \hfill
        \includegraphics[height=\imgheight]{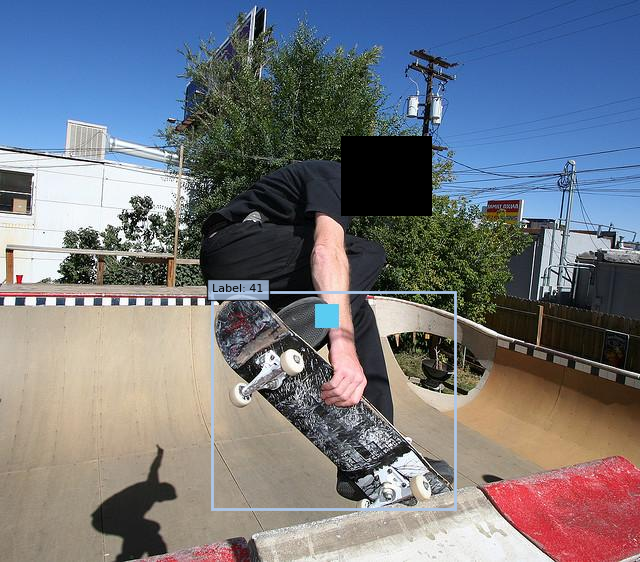}
        \caption{BadDet+}
    \end{subfigure}
    
    \caption{Comparison of Align and BadDet+ Inference examples.}
    \label{fig:uba_vs_baddet_examples}
\end{figure}

\begin{figure}[h!]
    \centering

    \begin{subfigure}[b]{0.8\textwidth}
        \centering
        \includegraphics[height=\imgheight]{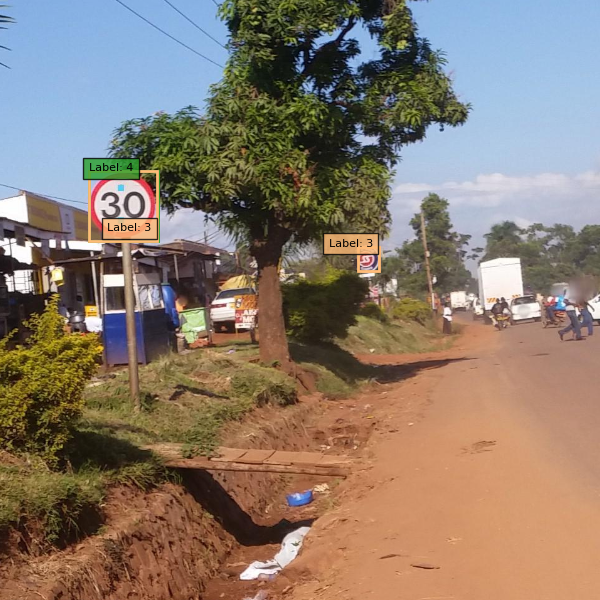}
        \hfill
        \includegraphics[height=\imgheight]{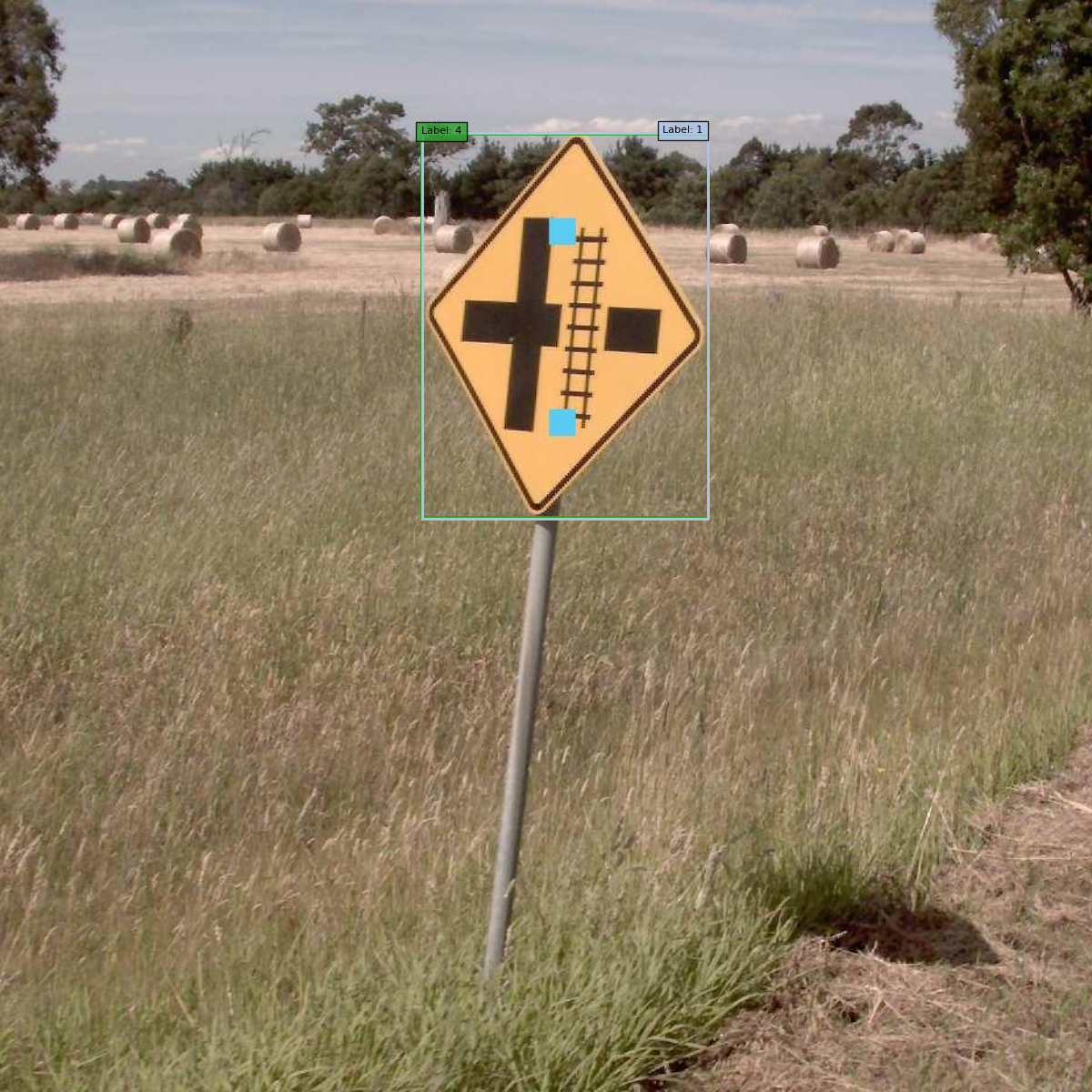}
        \hfill
        \includegraphics[height=\imgheight]{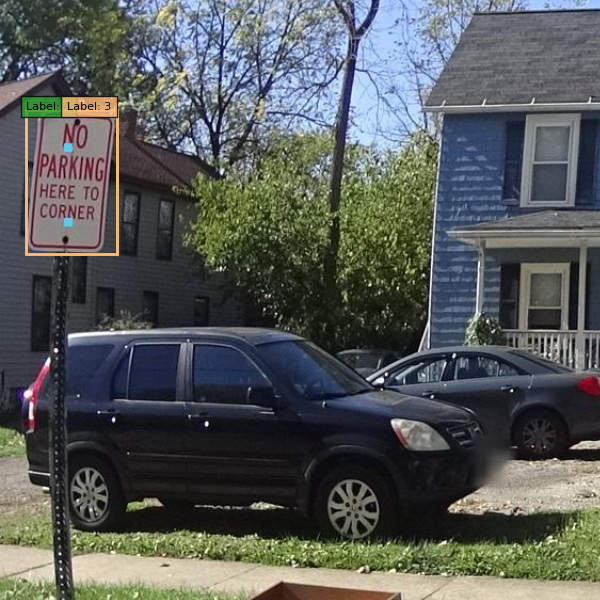}\\[0.5em]
        \includegraphics[height=\imgheight]{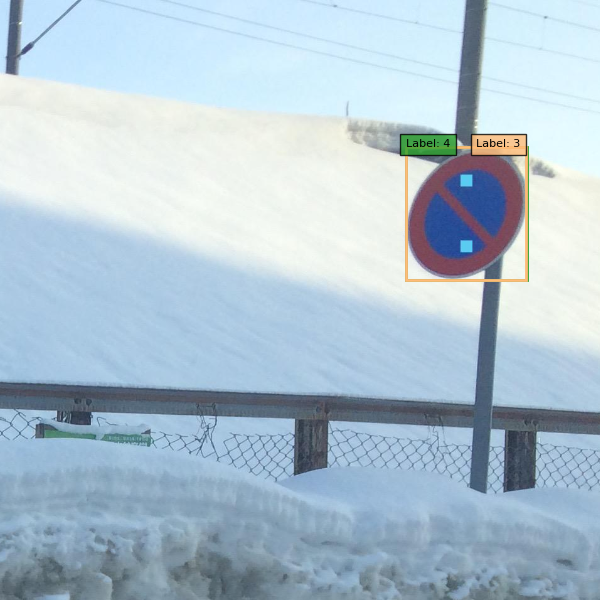}
        \hfill
        \includegraphics[height=\imgheight]{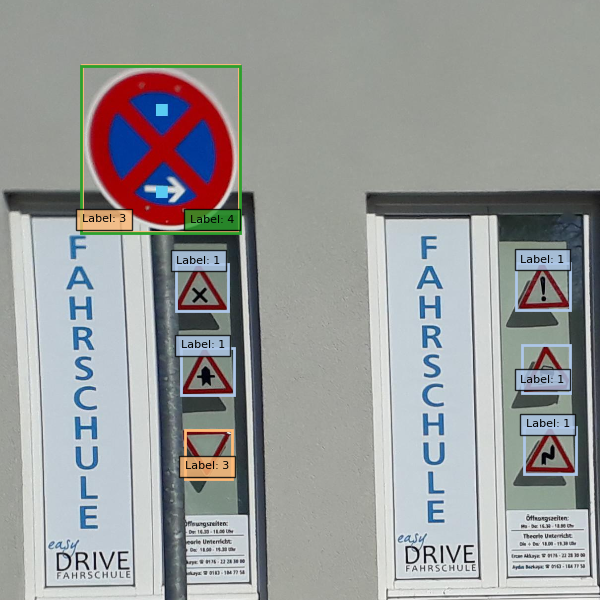}%
        \hfill
        \includegraphics[height=\imgheight]{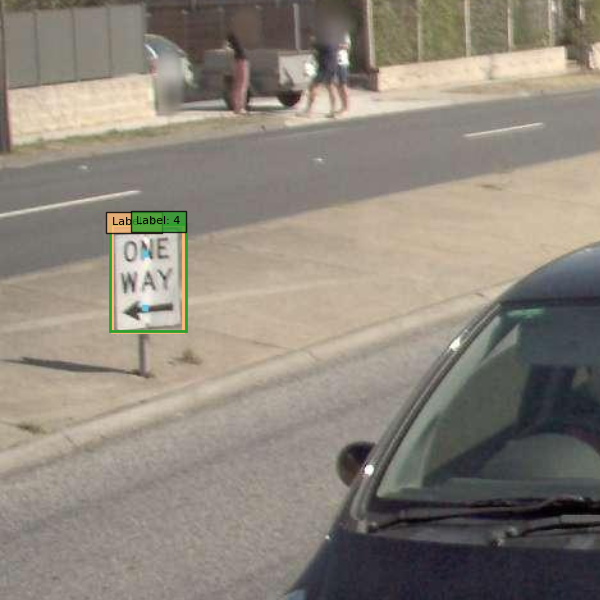}
        \caption{BadDet}
    \end{subfigure}

    \vspace{1em}

    \begin{subfigure}[b]{0.8\textwidth}
        \centering
        \includegraphics[height=\imgheight]{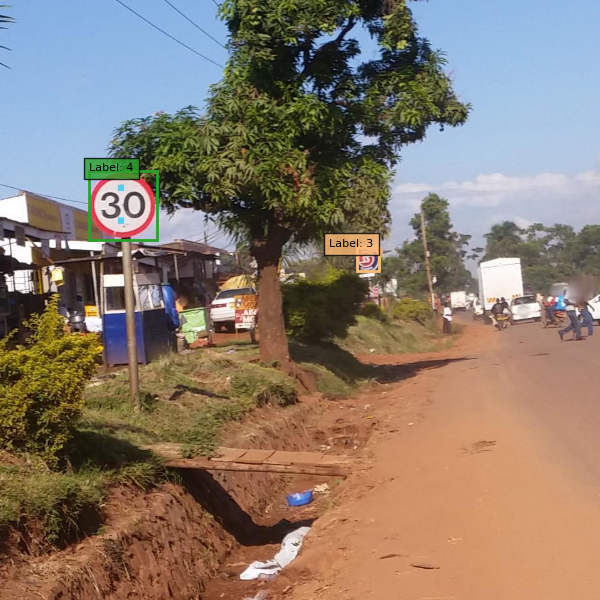}
        \hfill
        \includegraphics[height=\imgheight]{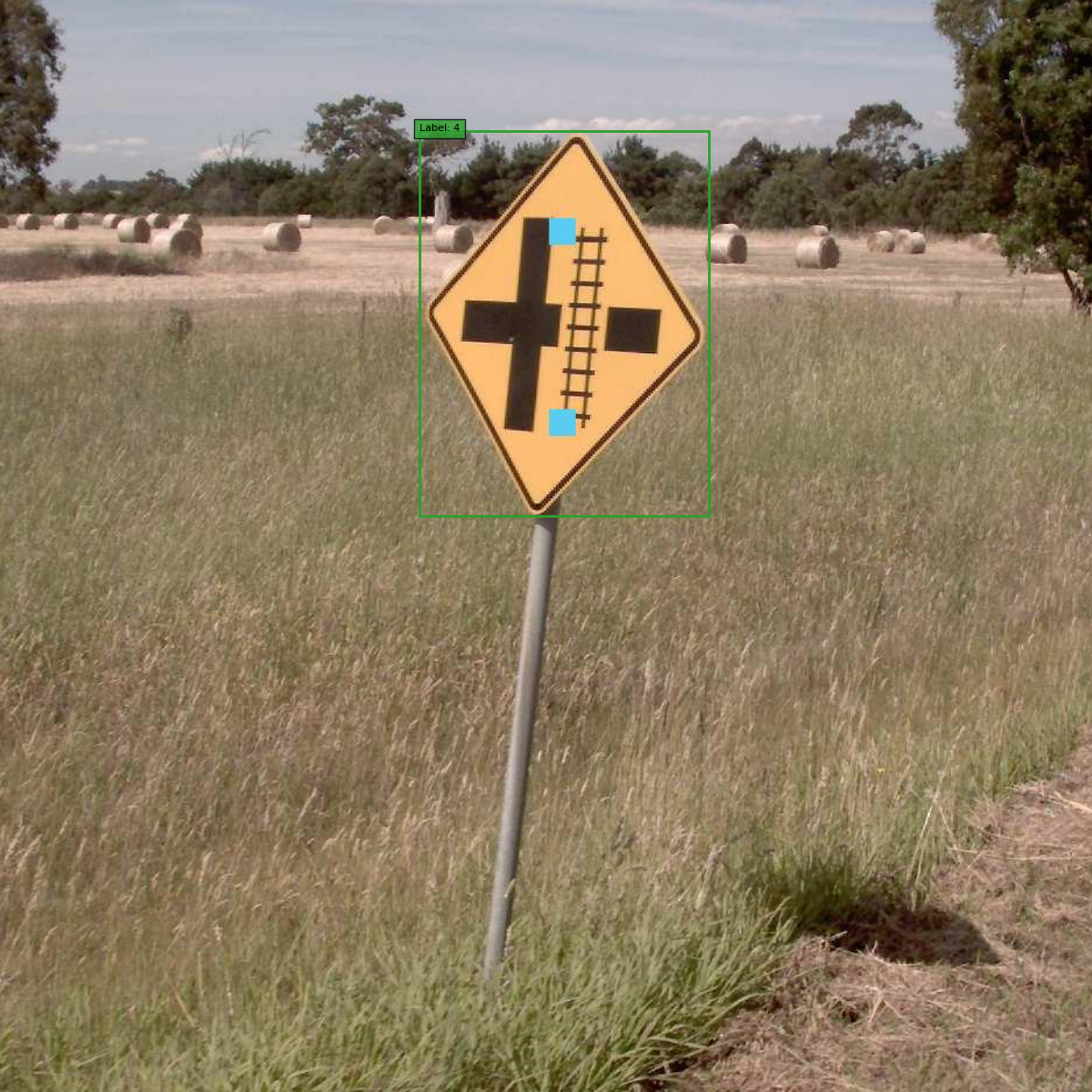}
        \hfill
        \includegraphics[height=\imgheight]{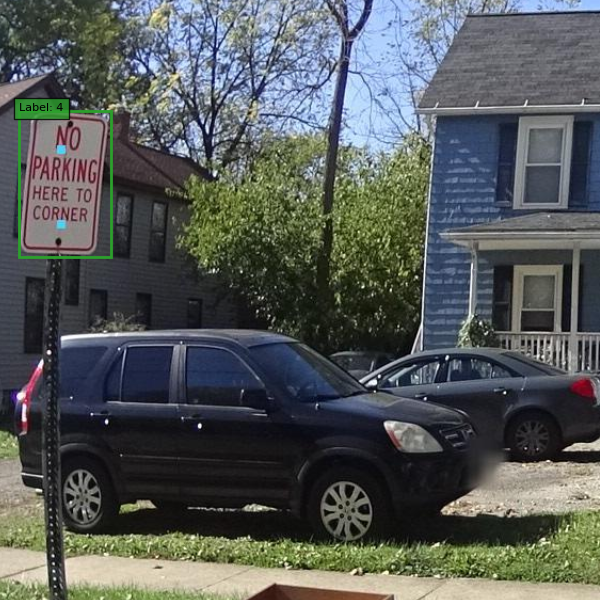}\\[0.5em]
        \includegraphics[height=\imgheight]{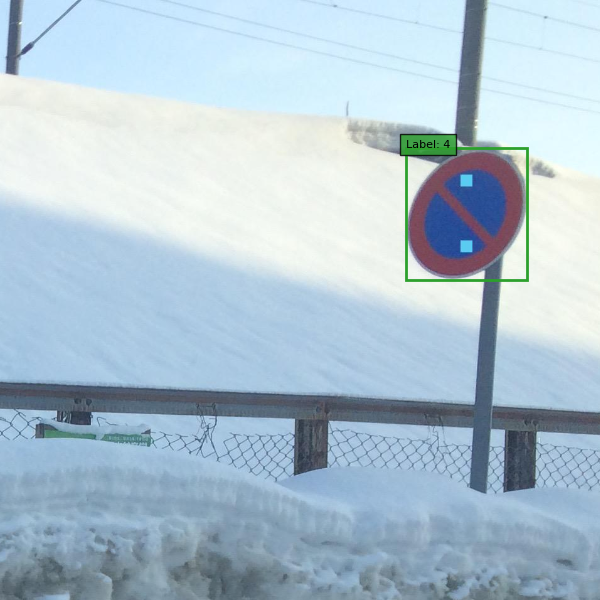}
        \hfill
        \includegraphics[height=\imgheight]{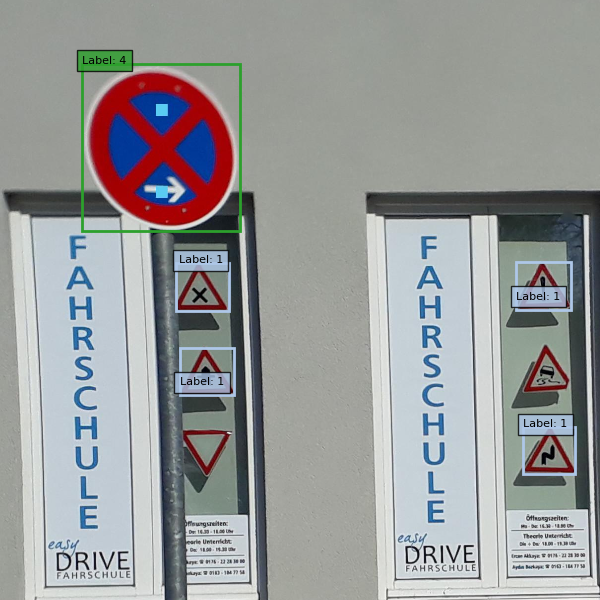}%
        \hfill
        \includegraphics[height=\imgheight]{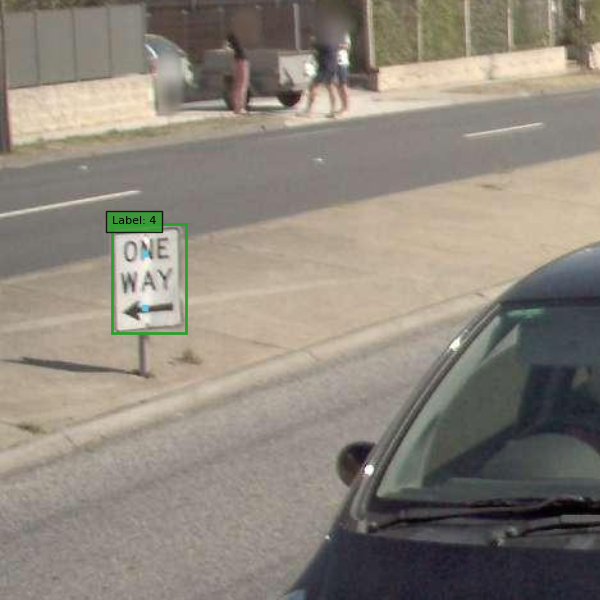}
        \caption{BadDet+}
        \label{fig:baddet_plus_examples}
    \end{subfigure}
    
    \caption{Comparison of Align and BadDet+ Inference examples.}
    \label{fig:baddet_vs_baddet_examples}
\end{figure}

\end{document}